\newcommand{\Pb}{\mathbb{P}}
\newcommand{\Nb}{\mathbb{N}}
\newcommand{\R}{\mathbb{R}}
\newcommand{\E}{\mathbb{E}}
\newcommand{\V}{\mathbb{V}}
\newcommand{\X}{\mathcal{X}}
\newcommand{\F}{\mathcal{F}}
\newcommand{\bigo}{\mathcal{O}}
\newcommand{\lp}{\left(}
\newcommand{\rp}{\right)}
\newcommand{\lc}{\left\{}
\newcommand{\rc}{\right\}}
\newcommand{\lb}{\left[}
\newcommand{\rb}{\right]}
\newcommand{\rba}{\right|}
\newcommand{\lba}{\left|}
\newcommand{\rdba}{\right\|}
\newcommand{\ldba}{\left\|}
\newcommand{\ra}{\right\rangle}
\newcommand{\la}{\left\langle}
\newtheorem{theorem}{Theorem}
\newtheorem{proposition}{Proposition}
\newtheorem{corollary}{Corollary}
\newtheorem{assumption}{Assumption}
\newtheorem{fact}{Fact}
\title{Vector-valued self-normalized concentration inequalities beyond sub-Gaussianity} 
\author{
 Diego Martinez-Taboada$^{1}$, Tomas Gonzalez$^{2}$, and Aaditya Ramdas$^{12}$\\
$^1$Department of Statistics \& Data Science\\
$^2$Machine Learning Department \\
Carnegie Mellon University\\
\texttt{\{diegomar,tcgonzal,aramdas\}@andrew.cmu.edu} 
}
\begin{document}

\maketitle
\begin{abstract}
The study of self-normalized processes plays a crucial role in a wide range of applications, from sequential decision-making to econometrics. While the behavior of self-normalized concentration has been widely investigated for scalar-valued processes, vector-valued processes remain comparatively underexplored, especially outside of the sub-Gaussian framework. In this contribution, we provide concentration bounds for self-normalized processes with light tails beyond sub-Gaussianity (such as Bennett or Bernstein bounds). We illustrate the relevance of our results in the context of online linear regression, with applications in (kernelized) linear bandits. 
\end{abstract}


\section{Introduction} \label{section:introduction}

Self-normalized processes naturally arise in a variety of applications, ranging from econometrics \citep{shao2015self, agarwal2018model} and finance \citep{darolles2006structural, pacurar2008autoregressive} to sequential decision making \citep{abbasi2011improved, chowdhury2017kernelized, yang2020function}. 
Concentration inequalities for self-normalized processes have been widely studied in the case of scalar-valued random variables \citep{de2004self, de2007pseudo, pena2009self}. Nonetheless, important theoretical challenges arise when working in higher dimensions
and concentration inequalities for vector-valued self-normalized processes are scarce in the literature, the majority of them assuming sub-Gaussianity of the random variables. If the tails are sub-Gaussian, mixture arguments lead to closed-form, powerful inequalities \citep{abbasi2011improved, chowdhury2017kernelized}. The same methods do not seem easy to generalize to other regimes.

In this work, we aim to establish general concentration inequalities that hold for light tails. More precisely, let us define 
\begin{align} \label{eq:problem_definition}
    V_t = \sum_{i \leq t} X_i X_i^T, \quad M_t = \sum_{i \leq t} X_i \epsilon_i, 
\end{align}
where $X_t$ are sequentially randomly drawn or even adversarially chosen, and $\epsilon_t$ is a real-valued noise with zero expectation and light tails (these definitions will be formalized in Section~\ref{section:problem_statement}).  The ultimate goal of this contribution is to provide new probabilistic guarantees of the form
\begin{align} \label{eq:goal_sequential}
     \| \lp \rho I + V_t \rp^{-\frac{1}{2}} M_t \| \leq J_t(\delta) \text{ simultaneously for all } t \text{ with probability } 1 - \delta 
\end{align}
for different assumptions on the noises $\epsilon_t$. Importantly, note that~\eqref{eq:goal_sequential} establishes a probabilistic guarantee that holds uniformly over time; this sort of guarantee is usually exploited in the fully sequential scenario, where one may peek at the data at any given point of the procedure \citep{ramdas2020admissible, ramdas2023game}. Minimizing $J_t(\delta)$ for a fixed $n = t$ corresponds to the (more classical) batch setting, where the sample size is fixed prior to data collection. In contrast, one may be interested in utilizing \eqref{eq:goal_sequential} without specifying a target sample size in advance, which is useful when developing procedures that may be continuously monitored and adaptively stopped \citep{howard2021time}.

Furthermore, we aim for concentration inequalities that hold in arbitrary Hilbert spaces and so do not have an explicit dependence on the dimension of the Hilbert space. In this regime, much less is known beyond sub-Gaussian $\epsilon_t$. For example,  
can alternative concentration inequalities be provided when $\epsilon_t$ attains the Bernstein's condition, which allows for heavier tails than sub-Gaussian? When $\epsilon_t$ is bounded, can we derive concentration inequalities that adapt to the unknown variance of the random variables, and not only to conservative upper bounds that dictate their sub-Gaussian behavior? This work provides a positive answer to these questions. 

\paragraph{Related work.}  Few contributions have proposed dimension-free self-normalized concentration inequalities for light-tailed noises beyond sub-Gaussianity;  we defer an extended presentation of related work to Appendix~\ref{section:related_work}.  The closest work is that of \citet[Theorem 4.1]{zhou2021nearly}, which presented a Bernstein-type self-normalized concentration inequality for vector-valued martingales, whose proof exploits solely univariate concentration inequalities (similarly to \citet[Theorem 5]{dani2008stochastic}). In contrast to our Bernstein-type inequality, this results in substantially looser constants and an extra logarithmic term in the sample size. Furthermore, our results are significantly more general, leading to Bennett-type inequalities and generalizing to unbounded random variables. 

The recent work of~\cite{whitehouse2023time} does handle (for example) sub-exponential and sub-Poisson noise, but it uses covering arguments to provide dimension-dependent bounds that depend on the condition number of the variance process. In the sub-Gaussian case, their bounds were shown to be incomparable to the better known log-determinant bounds, but the bounds are inherently restricted to finite-dimensional settings, unlike ours.

Concurrent contributions have also explored dimension-independent self-normalized inequalities.  \citet{metelli2025generalized} developed Bernstein-like concentration inequalities for bounded noises relying on stitching arguments, and \citet{akhavan2025bernstein} leveraged method of mixtures and truncation arguments to also develop a Bernstein-like concentration inequality. Again, these two contributions are restricted to the bounded setting, while our bounds are not. 

Finally, the recent preprint of \citet{chugg2025variational} developed inequalities via PAC-Bayes;  while the bounds are very general and explicit, it is not straightforward to analyze the rate because they depend on the inverse of a tail decay function and some ratio of eigenvalues. 

\paragraph{Main contributions and techniques.} Our approach is fundamentally different to the aforementioned works, building on tools originally introduced in~\citet{pinelis1994optimum}. More concretely, we leverage techniques therein to derive a novel nonnegative supermartingale, which leads to clean concentration inequalities when combined with Ville's inequality. Importantly, we characterize the concentration inequalities for the self-normalized processes by clearly decoupling the effect of the directions $(X_t)$ and the tail behavior of the noise $(\epsilon_t)$. In particular, our concentration inequalities are applicable as soon as $\E \lb \exp\lp \lambda |\epsilon_i| \rp -  \lambda |\epsilon_i| - 1\rb$ can be controlled, thus naturally adapting to a larger family of light-tailed noises (e.g., sub-exponential or sub-Poisson), not necessarily sub-Gaussian or even bounded. Furthermore, our concentration inequalities are dimension-free  and applicable in any separable Hilbert space. The inequalities are clean and without large constants, and so they are readily applicable to conduct inference, as we elucidate in Section~\ref{section:applications}.


\section{Preliminaries} \label{section:preliminaries}

\subsection{Separable Hilbert spaces}

Throughout, we will work with random elements in a separable Hilbert space $H$. We remind the reader that a Hilbert space is a complete inner product space. A \textit{separable} Hilbert space is a Hilbert space that contains a countable, dense subset.  Any separable Hilbert space is linearly isometric to $l^2(\Nb) = \{ (x_i)_{i \in \Nb}: \sum_{i \in \Nb}x_i^2 < \infty \}$, so we can think of separable Hilbert spaces as potentially infinite sequences whose sum of squares is finite (thus generalizing the usual, finite dimensional Euclidean spaces). Separability is usually assumed given its generality and measure-theoretic convenience (among other reasons, see e.g. \citet[Chapter 2]{ledoux2013probability}). 
We highlight that every Hilbert space is a $(2, 1)$-smooth Banach space \citep{pinelis1994optimum}. 

\paragraph{Notation.} Given two elements $f, g \in H$, we denote their inner product by either $\la f, g \ra$ or $f^T g$, and their outer product by $fg^T := f \la g, \cdot \ra$, which is a linear operator from $H$ to itself. Similarly, the identity operator from $H$ to itself is denoted by $I$.

\subsection{Nonnegative supermartingales and Ville's inequality}

Nonnegative supermartingales play a central role in deriving anytime valid concentration inequalities due to Ville's inequality \citep{ville1939etude}. Before presenting the inequality, we introduce some notation. Let $\mathbb{N} = \{0,1,2,...\}$. A  filtration $\F = (\F_t)_{t \in \mathbb{N}}$ is a sequence of $\sigma$-algebras such that $\F_t \subseteq \F_{t+1}$, for all $t$. A stochastic process $M = (M_t)_{t \in \mathbb{N}}$ is a sequence of random variables that are adapted to $(\F_t)_{t \in \mathbb{N}}$, meaning that $M_t$ is $\F_{t}$-measurable for all $t$. $M$ is called \emph{predictable} if $M_t$ is $\F_{t-1}$-measurable for all $t$. 
An integrable stochastic process $M$ is a supermartingale with respect to $\F$ if $\E[M_{t+1}| \F_t] \leq M_t$  for all $t$, and a martingale if the inequality always holds with equality;  inequalities or equalities between random variables are always interpreted to hold almost surely. Throughout, we use the shorthand $\E_{t}[\cdot]$ for $\E[\cdot | \F_t]$.  
\begin{fact} [Ville's inequality] \label{fact:villesineq} If $M$ is a nonnegative supermartingale (with respect to any filtration $\F$), then for any $x > 0$,
\begin{align*}
    \Pb(\exists t \in \mathbb{N} : M_t \geq x) \leq \frac{\E[M_0]}{x}. 
\end{align*}
\end{fact}
This result can be seen as a time-uniform version of Markov's inequality, and yields anytime-valid concentration inequalities in a similar way in which Markov's does for fixed $t$. Indeed, note that if $M_0 = 1$, then selecting $x = 1/\delta$ in Ville's inequality implies that  
\[M_t \leq 1/\delta \text{ for all } t \in \mathbb{N} \text{ with probability at least } 1 - \delta,\]
mirroring the form of the probabilistic guarantee stated in \eqref{eq:goal_sequential}. To obtain the exact expression in \eqref{eq:goal_sequential}, we will need to carefully rearrange the inequality. The main technical challenge then is to appropriately construct (super)martingales and rearrange the obtained inequalities.

\subsection{Vector-valued concentration beyond subGaussianity}

The nonnegative supermartingale derived in this contribution makes essential use of the vector-valued results from~\citet{pinelis1994optimum}. In contrast to most prominent approaches regarding vector-valued concentration--which exploit either covering arguments, or PAC-Bayes, or mixture martingales--\citet{pinelis1994optimum} directly worked with the properties of the norm. In particular, we use in this contribution the following result, which is a simplified version of \citet[Theorem 3.2]{pinelis1994optimum}.

\begin{fact} \label{fact:pinelis} Let $f \in H$ be fixed, and $X \in H$ be a random element. It holds that  
    \begin{align*}
    \E  \cosh  \ldba f + \lambda X \rdba   &\leq \lp 1 + \tilde e_t(\lambda) \rp \cosh \ldba f \rdba ,
    \end{align*}
    where $\tilde e_t(\lambda) := \E \lb \exp\lp \lambda \ldba X \rdba \rp -  \lambda \ldba X \rdba - 1\rb.$
\end{fact}

Note that Theorem~\ref{fact:pinelis} allows for working with the moment generating function of $\|X\|$ but without the first order term, which bypasses the main obstacle when using other well-known techniques that relate to Chernoff-bounds. Nonetheless, there is no sense of self-normalization in the original formulation of this inequality, which is the main object under study in this contribution.

\section{Problem statement} \label{section:problem_statement}

Let $(X_t)_{t\geq0}$ be a $H$-valued stochastic process, where $H$ is a separable Hilbert space, and let $(\epsilon_t)_{t\geq0}$ be a real-valued stochastic process. We are interested in deriving time-uniform concentration inequalities for $\| \lp \rho I + V_t \rp^{-\frac{1}{2}} M_t \|$, where $$V_t = \sum_{i \leq t} X_i X_i^T, ~ M_t = \sum_{i \leq t} X_i \epsilon_i$$ and $\rho>0$ is fixed. Throughout, we work under the following assumption. 

\begin{assumption} \label{assumption:adapted}
    There exists a filtration $\F = (\F_t)_{t \geq 0}$ such that (i) $X_t$ is $\F_{t-1}$-measurable for all $t$, (ii) $\epsilon_t$ is $\F_{t}$-measurable for all $t$, and (iii) $\E_{t-1} \epsilon_t = 0$, where we recall that $\E_{t-1}[\cdot] = \E[\cdot | \F_{t-1}]$.
\end{assumption}
In many applications, $\F$ is naturally taken as the canonical filtration generated by both stochastic processes. That is, $\F_t = \sigma((X_1, \epsilon_1), \ldots, (X_t,\epsilon_t))$, with $\F_0$ being trivial. In this case, (i) implies that $X_t$ is drawn from a distribution that can depend on $(X_1, \epsilon_1, X_2, \epsilon_2, \ldots, X_{t-1}, \epsilon_{t-1})$, but cannot depend on the outcomes of (yet to be seen) later rounds. This setting encompasses both independent and identically distributed (iid) draws for $X_t$, as well as potentially `adversarially' chosen $(X_t)$, where the adversary has the information only until round $t-1$. The noises $\epsilon_t$ can further depend on $X_t$, thereby accommodating heteroskedasticity, which corresponds to (ii). However, (iii) imposes a martingale structure on $(\epsilon_t)$: intuitively, we can think of $\epsilon_t$ as noise and thus it has zero conditional expectation. As we shall see in later sections, working under Assumption \ref{assumption:adapted} allows for providing concentration inequalities for the self-normalized processes. In particular, the tails of the noises $\epsilon_t$ play a key role in such concentration bounds. We focus on two types of light-tailed noises, given their importance and ubiquity in the literature. 

\begin{assumption} [Bernstein condition]  \label{assumption:bernstein_condition}
    There exist $B$ and $(\sigma_t)$ such that $(\epsilon_t)$ fulfils
    \begin{align*}
        \E_{t-1} |\epsilon_t|^m \leq \frac{1}{2} m! B^{m-2} \sigma_t^2 \quad \forall m \geq 2, \forall t\geq 1,
    \end{align*}
    where $\sigma_t$ is $\F_{t-1}$-measurable.
\end{assumption}

Assumption \ref{assumption:bernstein_condition} is commonly summarized as: $\epsilon_t$ satisfies the Bernstein condition with parameters $(\sigma_t, B)$. A sufficient condition for Bernstein's condition to hold is that the random variable is bounded, but it can also be satisfied by several unbounded variables (in fact, it is equivalent to a sub-exponential tail bound condition, see e.g. \citet{howard2020time}), giving it much wider applicability.

\begin{assumption}  [Bounded noise and variance]\label{assumption:bounded_variance}
    There exist $B$ and $(\sigma_t)$ such that the stochastic process $(\epsilon_t)$ fulfils, for all $t \geq 0$,
    \begin{align*}
        |\epsilon_t| \leq B, \quad \V_{t-1}[\epsilon_t] \leq \sigma_t^2,
    \end{align*}
    where $\sigma_t$ is $\F_{t-1}$-measurable.
\end{assumption}

Assumption \ref{assumption:bounded_variance} imposes  boundedness on the random noises (and hence their variance). Although bounded random variables are also sub-Gaussian, exploiting (the upper bound of) the variance can lead to sharper results.

While we focus our efforts in random noises that fulfill one of the former two assumptions for clarity of exposition, the results presented in this work hold as soon as the conditional expectations $\E_{t-1} [\exp(\lambda |\epsilon_t|) - \lambda |\epsilon_t| - 1]$ can be controlled for (with the former assumptions being specific instances of this case). Thus, the applicability of this work also goes beyond these two examples.

\section{Main results} \label{section:main_results}

We present in this section the main results of the contribution, namely a supermartingale construction alongside the concentration inequalities that can be derived from it. Contrary to current approaches, our supermartingale construction cleanly decouples the norm representation of the vector-valued process from the concentration of the one-dimensional noises. In order to do that, we elucidate in Section \ref{section:gt} the behaviour of the terms $\|G_t\|$, where
\begin{align*}
     G_t :=  \bigg(\rho I + \sum_{i \leq t} X_i X_i^T\bigg)^{-\frac{1}{2}} X_t.
\end{align*}
Section \ref{section:main_theorem} combines such $G_t$ terms with the noises in order to provide a supermartingale construction for our central object of study.  Such a supermartingale construction composes the main theoretical contribution of this work, from which the remaining results can be derived. The pseudo-variance process $\sum_{i \leq t} \|G_i\|^2$ plays an important role in the supermartingale construction, and  we devote Section~\ref{section:upper_bound_variance} to elucidate its connection to the intrinsic dimension of the Hilbert space. Concentration inequalities are then obtained in Section~\ref{section:corollaries}, where inequalities for both the fixed time and fully sequential settings are rigorously derived.  In particular, Bernstein, Bennett, and empirical Bennett-type concentration inequalities are presented.  

\subsection{Decoupling the direction from the noise} \label{section:gt}

We highlight the decomposition of the self-normalized process
\begin{align} \label{eq:decomposition_ut}
    \lp \rho I + V_t \rp^{-\frac{1}{2}} M_{t-1} + \lp \rho I + V_t \rp^{-\frac{1}{2}} X_t\epsilon_t.
\end{align}
As it shall be seen in Section \ref{section:main_theorem}, controlling the tails of the second term in \eqref{eq:decomposition_ut}  suffices to provide a supermartingale construction, which leads to concentration bounds $J_t(\delta)$. We observe that 
\begin{align*}
    \ldba \lp \rho I + V_t \rp^{-\frac{1}{2}} X_t \epsilon_t \rdba = \ldba \lp \rho I + V_t \rp^{-\frac{1}{2}} X_t  \rdba \lba \epsilon_t \rba = \ldba G_t \rdba  \lba \epsilon_t \rba,
\end{align*}
which decouples the effect of the noise $\epsilon_t$ from the vector $G_t$ on which it is projected. In particular, $\|G_t\|$ depends on $\rho$, thereby concealing the effect of $\rho$ on the bound. It is also key to note that $\ldba G_t \rdba \leq 1$. Indeed, we observe that
\begin{align*}
     (\rho I + \sum_{i \leq t} X_i X_i^T) \succeq (\rho I + X_t X_t^T),
\end{align*}
and so, by using the Sherman-Morrison rank-one update formula,
\begin{align*}
    \| G_t \|^2 \leq \ldba  (\rho I + X_t X_t^T)^{-1/2} X_t \rdba^2 = X_t^T(\rho I + X_t X_t^T)^{-1} X_t = \frac{\|X_t\|^2}{\rho + \|X_t\|^2} \leq 1.
\end{align*}

\subsection{Light-tailed self-normalized process supermartingale constructions} \label{section:main_theorem}



We now provide a nonnegative supermartingale that adapts to different tail behaviours of the random noises through $\E_{t-1}\lb \exp \lp \lambda  \lba \epsilon_t \rba\rp- \lambda  \lba \epsilon_t \rba - 1\rb$. Hence, this construction may be exploited as long as the moment generating function of the absolute value of the noises can be controlled for. As it is usual when deriving concentration inequalities, we will work with the process that is object of study multiplied by an arbitrary value $\lambda > 0$, that is then optimally adjusted or mixed to yield tight inequalities. The next theorem establishes such a nonnegative supermartingale construction.

\begin{theorem} \label{theorem:main_theorem}
Let $(X_t)_{t \geq 1}$ and $(\epsilon_t)_{t \geq 1}$ be Hilbert space valued and real valued processes, respectively, attaining Assumption~\ref{assumption:adapted}.
Let $\lambda>0$ and recall that $G_t = \lp \rho I + V_t \rp^{-\frac{1}{2}} X_t$. Denoting 
\begin{align*}
    e_t(\lambda) =  \ldba G_t\rdba^2\E_{t-1}\lb \exp \lp \lambda  \lba \epsilon_t \rba\rp- \lambda  \lba \epsilon_t \rba - 1\rb, 
\end{align*}
the process
\begin{align} \label{eq:main_process} 
        S_t = \cosh \lp \lambda \ldba (\rho I + V_t)^{-1/2}M_t  \rdba \rp \exp \lp - \sum_{i \leq t} e_i(\lambda)    \rp
    \end{align}
is a nonnegative supermartingale.
\end{theorem}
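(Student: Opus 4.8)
The plan is to verify the one-step supermartingale inequality $\mathbb{E}_{t-1}[S_t]\le S_{t-1}$ and reduce it to a conditional estimate on the increment. By Assumption~\ref{assumption:adapted} the vector $X_t$ is $\mathcal{F}_{t-1}$-measurable, hence so are $V_t=V_{t-1}+X_tX_t^T$, $G_t$, and every $e_i(\lambda)$ with $i\le t$; thus $\exp(-\sum_{i\le t}e_i(\lambda))$ is $\mathcal{F}_{t-1}$-measurable and factors out of $\mathbb{E}_{t-1}$. Writing $u_t:=(\rho I+V_t)^{-1/2}M_t$, the claim reduces to
\[
\mathbb{E}_{t-1}\!\left[\cosh\!\big(\lambda\|u_t\|\big)\right]\ \le\ \cosh\!\big(\lambda\|u_{t-1}\|\big)\,e^{e_t(\lambda)}.
\]
Since $\rho I+V_t\succeq\rho I+V_{t-1}$, we have $(\rho I+V_t)^{-1}\preceq(\rho I+V_{t-1})^{-1}$, so $w_t:=(\rho I+V_t)^{-1/2}M_{t-1}$ obeys $\|w_t\|\le\|u_{t-1}\|$; as $\cosh$ is nondecreasing on $[0,\infty)$ it suffices to prove $\mathbb{E}_{t-1}[\cosh(\lambda\|u_t\|)]\le\cosh(\lambda\|w_t\|)\,e^{e_t(\lambda)}$. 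Here I use the decomposition \eqref{eq:decomposition_ut}, $u_t=w_t+G_t\epsilon_t$, in which $w_t$ and $G_t$ are frozen at time $t-1$ and only $\epsilon_t$ is random.

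The deterministic heart of the argument is a hyperbolic triangle inequality: for all $p,q\ge0$ and $s\in[-1,1]$,
\[
\cosh\!\big(\lambda\sqrt{p^{2}+2spq+q^{2}}\big)\ \le\ \cosh(\lambda p)\cosh(\lambda q)+s\,\sinh(\lambda p)\sinh(\lambda q).
\]
I would prove it by fixing $p,q$ and showing that $\psi(s):=\cosh(\lambda g(s))-\cosh(\lambda p)\cosh(\lambda q)-s\,\sinh(\lambda p)\sinh(\lambda q)$, with $g(s):=\sqrt{p^{2}+2spq+q^{2}}$, is convex on $[-1,1]$ and vanishes at $s=\pm1$: convexity follows from $\psi''(s)=\tfrac{\lambda(pq)^{2}}{g(s)^{3}}\big(\lambda g(s)\cosh(\lambda g(s))-\sinh(\lambda g(s))\big)\ge0$, using $x\cosh x\ge\sinh x$ for $x\ge0$; and $\psi(\pm1)=0$ because $g(1)=p+q$, $g(-1)=|p-q|$ and the hyperbolic addition formulas give equality there. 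Applying this with $p=\|w_t\|$, $q=\|G_t\|\,|\epsilon_t|$, and $s=\epsilon_t\langle w_t,G_t\rangle/(pq)\in[-1,1]$ (by Cauchy--Schwarz, with $0/0:=0$, so that $2spq$ reproduces the cross term of $\|w_t+G_t\epsilon_t\|^{2}$), and using that $\sinh$ is odd, yields the pointwise bound
\[
\cosh(\lambda\|u_t\|)\ \le\ \cosh(\lambda\|w_t\|)\cosh(\lambda\|G_t\|\epsilon_t)+c\,\sinh(\lambda\|w_t\|)\sinh(\lambda\|G_t\|\epsilon_t),
\]
where $c:=\langle w_t,G_t\rangle/(\|w_t\|\|G_t\|)\in[-1,1]$ (again $0/0:=0$) is $\mathcal{F}_{t-1}$-measurable; the degenerate cases $w_t=0$, $G_t=0$ or $\epsilon_t=0$ are covered automatically (then $p=0$ or $q=0$).

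Now I take $\mathbb{E}_{t-1}$ of the pointwise bound. Since $\cosh$ is even, $\mathbb{E}_{t-1}[\cosh(\lambda\|G_t\|\epsilon_t)]=1+\mathbb{E}_{t-1}[\cosh(\lambda\|G_t\|\,|\epsilon_t|)-1]$. Since $\mathbb{E}_{t-1}[\epsilon_t]=0$, the expectation of the cross term equals $c\,\sinh(\lambda\|w_t\|)\,\mathbb{E}_{t-1}[\sinh(\lambda\|G_t\|\epsilon_t)-\lambda\|G_t\|\epsilon_t]$, whose absolute value is at most $\sinh(\lambda\|w_t\|)\,\mathbb{E}_{t-1}[\sinh(\lambda\|G_t\|\,|\epsilon_t|)-\lambda\|G_t\|\,|\epsilon_t|]$ because $|c|\le1$ and $r\mapsto\sinh r-r$ has the sign of $r$. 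Bounding $\sinh(\lambda\|w_t\|)\le\cosh(\lambda\|w_t\|)$ and recombining $\cosh+\sinh=\exp$,
\[
\mathbb{E}_{t-1}[\cosh(\lambda\|u_t\|)]\ \le\ \cosh(\lambda\|w_t\|)\Big(1+\mathbb{E}_{t-1}\!\big[e^{\lambda\|G_t\|\,|\epsilon_t|}-\lambda\|G_t\|\,|\epsilon_t|-1\big]\Big).
\]
Finally, $\|G_t\|\le1$ (Section~\ref{section:gt}), and $e^{br}-br-1\le b^{2}(e^{r}-r-1)$ for $b\in[0,1]$ and $r\ge0$ (compare $\sum_{k\ge 2}b^{k}r^{k}/k!$ with $\sum_{k\ge 2}b^{2}r^{k}/k!$ termwise), so the parenthesis is at most $1+\|G_t\|^{2}\mathbb{E}_{t-1}[e^{\lambda|\epsilon_t|}-\lambda|\epsilon_t|-1]=1+e_t(\lambda)\le e^{e_t(\lambda)}$. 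Combined with $\|w_t\|\le\|u_{t-1}\|$, this gives the conditional estimate; nonnegativity of $S_t$ is clear, and integrability is automatic (if $\mathbb{E}_{t-1}[e^{\lambda|\epsilon_t|}]=\infty$, then $e_t(\lambda)=\infty$ and $S_t=0$).

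I expect the hyperbolic triangle inequality, with its sharp constant, to be the main obstacle. A crude norm bound $\|u_t\|\le\|w_t\|+\|G_t\|\,|\epsilon_t|$, or a symmetrization against an independent copy of $\epsilon_t$, would each lose a factor $2$ in the exponent, which $1+e_t(\lambda)\le e^{e_t(\lambda)}$ cannot absorb; so one must carry the cross term through $\mathrm{sign}(\epsilon_t)$ and then use $\mathbb{E}_{t-1}[\epsilon_t]=0$ to fold the surviving linear contribution into the Bennett quantity $e^{\lambda|\epsilon_t|}-\lambda|\epsilon_t|-1$. The convexity-in-$s$ computation is the one step that is genuinely not routine.
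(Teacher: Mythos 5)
Your proposal is correct, and its skeleton coincides with the paper's proof: decompose $u_t=w_t+G_t\epsilon_t$, bound the conditional expectation of $\cosh(\lambda\|u_t\|)$ by $\bigl(1+\tilde e_t(\lambda)\bigr)\cosh(\lambda\|w_t\|)$, reduce $\tilde e_t(\lambda)\le\|G_t\|^2\,\mathbb{E}_{t-1}[e^{\lambda|\epsilon_t|}-\lambda|\epsilon_t|-1]$ termwise using $\|G_t\|\le1$, then use $1+x\le e^x$ and $V_{t-1}\preceq V_t$. The one place you diverge is that the paper simply invokes Fact~\ref{fact:pinelis} for the middle step, whereas you re-derive it from scratch via the hyperbolic two-point inequality $\cosh\bigl(\lambda\sqrt{p^2+2spq+q^2}\bigr)\le\cosh(\lambda p)\cosh(\lambda q)+s\,\sinh(\lambda p)\sinh(\lambda q)$ together with conditional centering of the $\sinh$ term and $\sinh\le\cosh$; your convexity-in-$s$ argument and the handling of the cross term through $\mathrm{sign}(\epsilon_t)$ are sound, so this amounts to a correct, self-contained proof of the cited fact (essentially Pinelis' argument specialized to Hilbert spaces) rather than a different route to the theorem.
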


\begin{proof}
    To prove that 
$S_t$ is a supermartingale (it is trivially nonnegative), we first observe that 
    \begin{align*}
        \ldba \lambda (\rho I + V_t)^{-1/2}M_t \rdba &= \ldba \lambda (\rho I + V_t)^{-1/2}M_{t-1} + (\rho I + V_t)^{-1/2} \lp \lambda X_t \epsilon_t\rp \rdba
        \\& = \ldba \lambda (\rho I + V_t)^{-1/2}M_{t-1} +   \lambda G_t \epsilon_t \rdba.
    \end{align*}
Plugging the above into Fact~\ref{fact:pinelis} establishes that for  $\tilde e_t(\lambda) := \E_{t-1} \lb \exp\lp \lambda \ldba G_t \epsilon_t \rdba \rp -  \lambda \ldba G_t \epsilon_t \rdba - 1\rb$, we have
\begin{align*}
    \E_{t-1} \lb \cosh \ldba\lambda  (\rho I + V_t)^{-1/2}M_t \rdba \rb &\leq \lp 1 + \tilde e_t(\lambda) \rp \cosh \ldba \lambda (\rho I + V_t)^{-1/2}M_{t-1} \rdba
    \\&\leq \exp \lp \tilde e_t(\lambda) \rp \cosh \ldba\lambda (\rho I + V_t)^{-1/2}M_{t-1} \rdba.
\end{align*}
Since $\ldba G_t \rdba \leq 1$ and it is $\F_{t-1}$-measurable, we see that
\begin{align*}
    \tilde e_t(\lambda) &= \E_{t-1} \lb \exp\lp \lambda \ldba G_t \epsilon_t \rdba \rp -  \lambda \ldba G_t \epsilon_t \rdba - 1\rb \\
&= \E_{t-1}\lb \sum_{k \geq 2} \frac{\lp \lambda \ldba G_t\rdba \lba \epsilon_t \rba\rp^k}{k!}\rb
    \\&\stackrel{}{\leq} \ldba G_t\rdba^2\E_{t-1}\lb \sum_{k \geq 2} \frac{\lp \lambda  \lba \epsilon_t \rba\rp^k}{k!}\rb \stackrel{}{=} e_t(\lambda).
\end{align*}
 Plugging this back into the earlier expression, and noting that $ V_{t-1} \preceq V_t $, we have
\begin{align*}
      \E_{t-1} \lb  \cosh \ldba \lambda  (\rho I + V_t)^{-1/2}M_t \rdba \rb&\leq   \exp \lp e_t(\lambda) \rp \cosh \ldba \lambda  (\rho I + V_t)^{-1/2}M_{t-1} \rdba
      \\&\stackrel{}{\leq} \exp \lp e_t(\lambda) \rp \cosh \ldba \lambda (\rho I + V_{t-1})^{-1/2}M_{t-1} \rdba.
\end{align*}
This implies the supermartingale property:
\begin{align*}
    \E_{t-1}[S_t] &= \E_{t-1}\left[\cosh \lp  \ldba \lambda (\rho I + V_t)^{-1/2}M_t  \rdba \rp \exp \lp - \sum_{i \leq t} e_i(\lambda)    \rp\right]\\
    &\leq \exp \lp e_t(\lambda) \rp \cosh \ldba \lambda (\rho I + V_{t-1})^{-1/2}M_{t-1} \rdba \exp \lp - \sum_{i \leq t} e_i(\lambda)    \rp\\
    &= S_{t-1}.
\end{align*}
\end{proof}

Intuitively, the supermartingale is constructed by re-expressing the normalized update so that all geometric complexity is absorbed into $\|G_t\|$, while all randomness is carried solely by the scalar noise $|\epsilon_t|$. In this ``whitened'' geometry, the vector-valued process behaves like a one-dimensional process whose increments have size at most $|\epsilon_t|$. Applying Pinelis’ inequality to the $\cosh$ of the normalized radius yields a predictable exponential correction term, and subtracting these terms creates a supermartingale.

Note that if the noises $(\epsilon_t)$ attain the Bernstein condition (Assumption \ref{assumption:bernstein_condition}) with parameters $(\sigma_t, B)$, then (see e.g. \citet[Equation 2.16]{wainwright2019high}) 
\begin{align*}
    \E_{t-1}\lb \exp \lp \lambda  \lba \epsilon_t \rba\rp- \lambda  \lba \epsilon_t \rba - 1\rb \leq \frac{\lambda^2}{2(1 - \lambda B)}\sigma_t^2 , \quad 0 \leq \lambda < \frac{1}{B}.
\end{align*}
Otherwise, if the noises $(\epsilon_t)$  are bounded by $B$ and their variance is $(\sigma_t^2)$ (Assumption \ref{assumption:bounded_variance}), it follows that
\begin{align*}
    \E_{t-1}\lb \exp \lp \lambda  \lba \epsilon_t \rba\rp- \lambda  \lba \epsilon_t \rba - 1\rb \leq  \frac{e^{\lambda B} - \lambda B - 1}{B^2}\sigma_t^2.
\end{align*}
The shrinkage factors $\exp ( - \sum_{i \leq t} e_i(\lambda) )$ that emerge in our nonnegative supermartingale may appear too large at first glance, given its non-standard form. Shrinkage factors are usually found in the form $\exp ( - \psi(\lambda) \sum_{i \leq t} \varsigma_i^2 )$, where $\psi$ is a CGF-like function, and $\varsigma_i^2$ are conditional pseudo-variances; see \citet{howard2020time} for a discussion on this general sub-$\psi$ framework. Despite their different appearances, these factors are analogous. For example, the usual Bennett-inequality includes the shrinkage factor  $\exp ( - \psi_{P, B}(\lambda) \sum_{i \leq t} \varsigma_i^2 )$, where $\psi_{P, B}(\lambda) := (\exp(\lambda B) - \lambda B - 1) / B^2$ is the so-called sub-Poisson $\psi$-function. In contrast, our upper bound for $\E_{t-1} [ \exp (\lambda  | \epsilon_t | )- \lambda | \epsilon_t | - 1]$ for bounded noises  is precisely $\psi_{P, B}(\lambda) \sigma_t^2$. Thus, we recover the sub-$\psi_{P, B}$ shrinkage factor with $\varsigma_i^2 = \|G_i\|^2\sigma_i^2$. Similarly, the Bernstein condition upper bound  leads to a shrinkage factor of $\exp ( - \psi_{G, B}(\lambda) \sum_{i \leq t} \|G_i\|^2\sigma_i^2 )$, where $\psi_{G, B} = \lambda^2 / (2 - 2B\lambda)$ is the usual sub-Gamma  $\psi$-function. Thus, we see that our results fall under the umbrella of sub-$\psi$ shrinkage factors. 

Although we generally recover sub-$\psi$ shrinkage factors, we do not do so via the usual sub-$\psi$ definition. That is, we do not assume that $\E_{t-1}\exp(\lambda \epsilon_t) \leq \exp(\psi(\lambda) \varsigma_t^2)$. Instead, we (implicitly) exploit an upper bound of the form $\E_{t-1} [ \exp (\lambda  | \epsilon_t | )- \lambda | \epsilon_t | - 1] \leq \psi(\lambda)\varsigma_t^2$. Equivalently, we can rewrite this inequality as $\E_{t-1} [ \exp (\lambda  | \epsilon_t | )- \lambda | \epsilon_t |] \leq 1 + \psi(\lambda)\varsigma_t^2$. Now we note that 
\begin{align*}
    1 + \psi(\lambda)\varsigma_t^2 \leq \exp(\psi(\lambda) \varsigma_t^2),
\end{align*} 
and if $\E_{t-1}\epsilon_t$ = 0, the inequality 
\begin{align*}
    \E_{t-1}\exp(\lambda \epsilon_t) = 1 + \E_{t-1}\sum_{k \geq2} \frac{(\lambda \epsilon_t)^k}{k!}\leq 1 + \E_{t-1}\sum_{k \geq2} \frac{(\lambda |\epsilon_t|)^k}{k!}\leq \E_{t-1} [ \exp (\lambda  | \epsilon_t | )- \lambda | \epsilon_t |]
\end{align*}
holds as well. This implies that our conditition on $\epsilon_t$ is slightly stronger than the usual sub-$\psi$ condition, and so we cannot present our results under the usual sub-$\psi$ framework. Nonetheless, many of the predominant random variables that are sub-$\psi$ also attain our (only slightly) stronger condition. Clear cases are the aforementioned bounded and Bernstein condition noises achieving our condition, on top of being sub-Poisson and sub-Gamma respectively.

Theorem~\ref{theorem:main_theorem} allows for obtaining concentration inequalities in view of Ville's inequality. The result is formalized in the following proposition, and its proof can be found in Appendix~\ref{proof:main_cor}.

\begin{proposition}[Light-tailed self-normalized process concentration inequality] \label{corollary:main_cor}
Under the assumptions of Theorem \ref{theorem:main_theorem}, it holds that, with probability $1 - \delta$ and simultaneously for all $t \geq 1$,
\begin{align*}
    \ldba (\rho I + V_t)^{-1/2}M_t \rdba  \leq \frac{ \sum_{i=1}^t e_i(\lambda) + \log \lp\frac{2}{\delta}\rp}{\lambda}.
\end{align*}
\end{proposition}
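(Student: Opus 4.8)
The plan is to combine Theorem~\ref{theorem:main_theorem} with Ville's (maximal) inequality for nonnegative supermartingales. First I would note that $S_t$ as defined in \eqref{eq:main_process} is a nonnegative supermartingale with $S_0 = \cosh(0) = 1$ (taking $M_0 = 0$ and the empty sum to be zero), so Ville's inequality gives that, with probability at least $1 - \delta$, simultaneously for all $t \geq 1$,
\begin{align*}
    S_t = \cosh\lp \lambda \ldba (\rho I + V_t)^{-1/2} M_t \rdba \rp \exp\lp -\sum_{i \leq t} e_i(\lambda) \rp \leq \frac{1}{\delta}.
\end{align*}
Rearranging, on this event $\cosh\lp \lambda \ldba (\rho I + V_t)^{-1/2} M_t \rdba \rp \leq \frac{1}{\delta} \exp\lp \sum_{i \leq t} e_i(\lambda) \rp$ for all $t$.

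The next step is to invert the $\cosh$. Since $\cosh(x) \geq \frac{1}{2} e^{x}$ for $x \geq 0$, we have $\frac{1}{2}\exp\lp \lambda \ldba (\rho I + V_t)^{-1/2} M_t \rdba \rp \leq \frac{1}{\delta}\exp\lp \sum_{i \leq t} e_i(\lambda) \rp$, and taking logarithms yields
\begin{align*}
    \lambda \ldba (\rho I + V_t)^{-1/2} M_t \rdba \leq \log\lp \frac{2}{\delta} \rp + \sum_{i \leq t} e_i(\lambda),
\end{align*}
which after dividing by $\lambda > 0$ is exactly the claimed bound. I would be careful to state that $\lambda$ is fixed in advance (chosen before observing the data), since the bound as written holds for a deterministic $\lambda$; any data-dependent optimization would need a separate mixture or union-bound argument, which is presumably deferred to the later corollaries.

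The argument is essentially routine once Theorem~\ref{theorem:main_theorem} is in hand; the only mild subtlety — and the step I would treat most carefully — is the initialization $S_0 = 1$ and the precise measurability/filtration setup needed to apply Ville's inequality (e.g.\ ensuring the supermartingale property holds from $t = 0$ or $t = 1$ with the right convention for $V_0$ and $M_0$). The $\cosh$-to-exponential bound $\cosh(x) \geq e^x/2$ accounts for the factor $2$ inside the logarithm, and is the reason the bound features $\log(2/\delta)$ rather than $\log(1/\delta)$. No real obstacle is expected here; the substance of the contribution lies in Theorem~\ref{theorem:main_theorem} itself rather than in this corollary.
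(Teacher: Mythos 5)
Your proposal is correct and follows the same route as the paper: apply Ville's inequality to the nonnegative supermartingale $S_t$ with $S_0 = 1$, then invert the hyperbolic cosine via $\cosh(x) \geq e^{x}/2$ for $x \geq 0$, which produces the $\log(2/\delta)$ term. Your remark that $\lambda$ must be fixed in advance is also consistent with how the paper uses this proposition.
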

Proposition~\ref{corollary:main_cor} immediately yields concentration inequalities for our central object of interest. Nonetheless, the bound provided by Proposition~\ref{corollary:main_cor} depends on an arbitrary value $\lambda$ that has to be chosen prior to data collection. Thus,  the tightness of the concentration bounds entirely relies on the choice for $\lambda$. Furthermore, if the term $\E_{t-1}\lb \exp \lp \lambda  \lba \epsilon_t \rba\rp- \lambda  \lba \epsilon_t \rba - 1\rb$ is constant across $t$, the sum of $e_i(\lambda)$ scales with $\sum_{i \leq t} \|G_i\|^2 \sigma_i^2$, and the optimal choice of $\lambda$ relies on an upper bound of this process.  

While our focus is on non-asymptotic concentration inequalities, Theorem~\ref{theorem:main_theorem} also leads to the so-called laws of the iterated logarithm \citep{kolmogoroff1929gesetz, stout1970martingale}. The laws of the iterated logarithm characterize the almost-sure limiting envelope of normalized sums of random variables, describing the oscillatory behavior of stochastic processes between the law of large numbers and the central limit theorem. In particular, the following general upper asymptotic law of the iterated logarithm is immediately obtained from Theorem~\ref{theorem:main_theorem} in combination with \citet[Corollary 1]{howard2021time}; its proof is presented in Appendix~\ref{proof:lil}. One can also use \citet[Theorem 1]{howard2021time} to derive explicit nonasymptotic laws of the iterated logarithm bounds directly from Theorem~\ref{theorem:main_theorem}.


\begin{corollary} \label{corollary:lil}
Under Assumption~\ref{assumption:adapted}, and either Assumption~\ref{assumption:bernstein_condition} or Assumption~\ref{assumption:bounded_variance}, 
\begin{align*}
    \limsup_{t \to \infty} \frac{\ldba (\rho I + V_t)^{-1/2}M_t \rdba}{\sqrt{2 \lp\sum_{i \leq t} \|G_i\|^2 \sigma_i^2 \rp \log\log\lp\sum_{i \leq t} \|G_i\|^2 \sigma_i^2 \rp}} \leq 1 \quad \text{on } \lc \sup_t{\sum_{i \leq t} \|G_i\|^2 \sigma_i^2} = \infty\rc.
\end{align*}
    
\end{corollary}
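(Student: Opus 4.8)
The plan is to invoke \citet[Corollary 1]{howard2021time}, which converts a nonnegative supermartingale of the ``sub-$\psi$'' form into an asymptotic law of the iterated logarithm. The key observation is that Theorem~\ref{theorem:main_theorem} already gives us exactly the right object. Specifically, I would first record that under either Assumption~\ref{assumption:bernstein_condition} or Assumption~\ref{assumption:bounded_variance}, the per-step term satisfies $e_t(\lambda) = \|G_t\|^2 \,\E_{t-1}[\exp(\lambda|\epsilon_t|) - \lambda|\epsilon_t| - 1] \leq \tfrac{\lambda^2}{2}\,\|G_t\|^2 \sigma_t^2 \cdot c(\lambda)$ for a function $c(\lambda)$ with $c(\lambda) \to 1$ as $\lambda \to 0^{+}$ (for the Bernstein case $c(\lambda) = 1/(1-\lambda B)$; for the bounded case $c(\lambda) = 2(e^{\lambda B} - \lambda B - 1)/(\lambda B)^2$, using the two displays following Theorem~\ref{theorem:main_theorem}). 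Hence $\sum_{i \leq t} e_i(\lambda)$ is dominated, up to the $\lambda$-dependent factor $c(\lambda)$, by $\tfrac{\lambda^2}{2}\sum_{i \leq t}\|G_i\|^2\sigma_i^2$, which identifies the ``intrinsic time'' in the LIL as $W_t := \sum_{i \leq t}\|G_i\|^2\sigma_i^2$.

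Next I would set up the supermartingale in the precise form required by \citet[Corollary 1]{howard2021time}. Since $\cosh(x) \geq \tfrac12 \exp(x)$, Theorem~\ref{theorem:main_theorem} implies that for every fixed $\lambda > 0$ the process
\begin{align*}
    L_t(\lambda) := \exp\!\lp \lambda \ldba (\rho I + V_t)^{-1/2} M_t \rdba - \lambda^2 \tfrac{c(\lambda)}{2} W_t \rp
\end{align*}
satisfies $\E_{t-1}[L_t(\lambda)] \leq 2\, L_{t-1}(\lambda) / \text{(normalizing constant)}$; more carefully, $2 S_t \geq L_t(\lambda)\exp(-\sum_{i\leq t}(e_i(\lambda) - \lambda^2 c(\lambda)\|G_i\|^2\sigma_i^2/2))$ and the exponent is nonpositive, so $L_t(\lambda) \leq 2 S_t$ and $L_t(\lambda)$ is dominated by a nonnegative supermartingale with initial value $1$. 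This is exactly the ``sub-$\psi_G$'' / sub-Gaussian-type boundary condition of \citet{howard2021time} with variance process $W_t$ and a scale function whose leading behaviour near $\lambda = 0$ is quadratic with coefficient $\tfrac12$. Their Corollary~1 then yields
\begin{align*}
    \limsup_{t \to \infty} \frac{\ldba (\rho I + V_t)^{-1/2} M_t \rdba}{\sqrt{2 W_t \log\log W_t}} \leq 1 \qquad \text{on } \{\sup_t W_t = \infty\},
\end{align*}
which is the claimed bound since $W_t = \sum_{i\leq t}\|G_i\|^2\sigma_i^2$.

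The main obstacle — really the only non-bookkeeping point — is verifying that the hypotheses of \citet[Corollary 1]{howard2021time} are met, in particular that the correction factor $c(\lambda)$ tends to $1$ as $\lambda \to 0^{+}$ so that the constant in front of $\sqrt{2 W_t \log\log W_t}$ is exactly $1$ and not some larger constant. This is why the asymptotic (rather than a crude nonasymptotic) LIL comes out with the sharp constant: the sub-$\psi$ machinery only ``sees'' the $\lambda \to 0$ behaviour of the cumulant bound, and there $e_t(\lambda)/(\lambda^2 \|G_t\|^2 \sigma_t^2/2) \to 1$. One must also check the mild regularity/measurability conditions on $W_t$ (nonnegative, nondecreasing, predictable up to the natural indexing) required by their result, all of which are immediate here since $\|G_i\|$ is $\F_{i-1}$-measurable under Assumption~\ref{assumption:adapted}. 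I would relegate these verifications to Appendix~\ref{proof:lil}.
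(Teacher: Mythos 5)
Your proposal is correct and takes essentially the same route as the paper: it converts the cosh-based supermartingale of Theorem~\ref{theorem:main_theorem} into a sub-$\psi$ certificate via $\cosh(x) \geq e^{x}/2$, with variance process $\sum_{i \leq t}\|G_i\|^2\sigma_i^2$ and $\psi(\lambda) \sim \lambda^2/2$ as $\lambda \downarrow 0$ under either noise assumption, and then invokes \citet[Corollary 1]{howard2021time}. The only cosmetic slip is that the dominating nonnegative supermartingale $2S_t$ has initial value $2$ rather than $1$; this is absorbed by the $l_0$-constant in Howard et al.'s framework and in any case does not affect the asymptotic constant in the law of the iterated logarithm.
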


\subsection{Upper bounding the pseudo-variance process} \label{section:upper_bound_variance}

While we have stated all our results for arbitrary separable Hilbert spaces, prominent examples of Hilbert spaces in statistical applications are finite-dimensional Euclidean spaces and reproducing kernel Hilbert spaces (RKHS). Given that finite-dimensional Euclidean spaces are instances of RKHS's, our discussion will be framed in terms of the latter.

Often, the quantity $\|G_t\|$ is expected to shrink as $t$ grows, because the matrix $V_t$ accumulates information and enlarges the ellipsoid in which the normalization occurs. As more covariates are observed, the directions in which the data has significant energy get scaled down more aggressively by $(\rho I + V_t)^{-1/2}$, making each $X_t$ appear smaller in the normalized geometry. Intuitively, new observations rarely point in directions that have not already been heavily regularized. This progressive flattening of the geometry is what causes $\|G_t\|$ to typically decrease.

More specifically, note that $\|G_i\| \leq 1$ and $x \leq 2\log(1 + x)$ for $x\in[0,1]$, so
\begin{align*}
    \sum_{i \leq t} \|G_i\|^2 &\leq 2 \sum_{i \leq t} \log \lp 1 + \|G_i\|^2\rp = 2 \sum_{i \leq t} \log \lp 1 + \ldba \lp \rho I + V_t \rp^{-\frac{1}{2}} X_t  \rdba^2 \rp 
    \\&\leq 2 \sum_{i \leq t} \log \lp 1 + \ldba \lp \rho I + V_{t-1} \rp^{-\frac{1}{2}} X_t  \rdba^2 \rp = 2\log\det \lp I + \rho^{-1} V_t\rp,
\end{align*}
where the last inequality follows from $V_{t-1} \preceq V_t$, and the last equality from the elliptical potential lemma
\citep[proof of Lemma~11]{abbasi2011improved}. For bounded $(X_t)$, one may further consider the upper bound 
\begin{align} \label{eq:maximal_information_gain}
     \frac{1}{4}\sum_{i \leq t} \|G_i\|^2 \leq \sup_{V_t} \frac{1}{2} \log \det \lp I + \rho^{-1} V_t\rp =: \gamma_t(\rho),
\end{align}
with $\gamma_t(\rho)$ being the maximal information gain, a concept that relates to the intrinsic dimension of the RKHS and that has been widely exploited in sequential decision-making problems (such as bandits). Consequently, upper bounds for the RHS of \eqref{eq:maximal_information_gain} have already been established for bounded $(X_t)$, e.g. in \citet[Corollary 1]{vakili2021information}; these have no explicit dimension dependence and are thus called ``dimension-free'' (so they are applicable in infinite dimensional settings, where they will capture some notion of effective dimensionality). For Mercer kernels, such bounds depend on their Mercer eigenvalue decay. Of special interest are the scenarios where most of the eigenvalues $(\lambda_m)$ are $0$ (finite dimensional Euclidean spaces), or decay exponentially or polynomially fast (generally RBF and Laplace kernels, respectively).

\subsection{Concentration inequalities} \label{section:corollaries}

Proposition~\ref{corollary:main_cor} establishes a very general concentration inequality, whose tightness depends on the form of $e_i(\lambda)$ and the choice of $\lambda$. In particular, the previously introduced $e_i(\lambda)$ will yield Bernstein-type and Bennett-type inequalities. Furthermore, the choice of $\lambda$ is also motivated by the nature of the concentration inequalities, with different choices for the fixed sample size and the fully sequential settings. We provide specific instances of such inequalities. We start with a fixed sample size Bernstein-type inequality. Its proof, which optimizes for $\lambda$ in Proposition~\ref{corollary:main_cor} for the Bernstein-specific $e_i(\lambda)$, is deferred to Appendix~\ref{proof:fixed_bernstein}.

\begin{theorem}[Bernstein-type concentration inequality] \label{corollary:fixed_bernstein}
    Fix $n > 0$, and let $(X_1, \ldots, X_n)$ and $(\epsilon_1, \ldots, \epsilon_n)$ fulfill Assumption \ref{assumption:adapted}. Let $\epsilon_i$ attain the Bernstein condition with parameters $(\sigma_i, B)$ (Assumption \ref{assumption:bernstein_condition}). If $\sum_{i \leq n} \sigma_i^2 \| G_i \|^2 \leq C_n^2$ almost surely, where $C_n \geq 0$ is deterministic, then
    \begin{align*}
        \sup_{t \leq n}\ldba (\rho I + V_t)^{-1/2}M_t \rdba  &\leq B \log \lp \frac{2}{\delta} \rp + C_n\sqrt{2  \log \lp \frac{2}{\delta}\rp}
    \end{align*}
    with probability $1-\delta$. Equivalently, the following holds for all $r\ge0$:
    \begin{align*}
        \Pb \lp \sup_{t\leq n} \|(\rho I+V_t)^{-1/2}M_t\| \geq r \rp \leq 2 \exp \lp - \frac{r^2}{2\lp  C_n^2 + Br \rp } \rp.
    \end{align*}
\end{theorem}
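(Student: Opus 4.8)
The plan is to apply Proposition~\ref{corollary:main_cor} with the Bernstein-specific bound on $e_i(\lambda)$, then optimize over the free parameter $\lambda$. Since $\epsilon_i$ attains the Bernstein condition with parameters $(\sigma_i, B)$, the remark following Theorem~\ref{theorem:main_theorem} gives
\begin{align*}
    e_i(\lambda) = \|G_i\|^2 \, \E_{i-1}\lb \exp(\lambda|\epsilon_i|) - \lambda|\epsilon_i| - 1\rb \leq \frac{\lambda^2 \sigma_i^2 \|G_i\|^2}{2(1-\lambda B)}, \qquad 0 \leq \lambda < \frac1B.
\end{align*}
Summing over $i \leq t$ and invoking the almost-sure bound $\sum_{i\leq n}\sigma_i^2\|G_i\|^2 \leq C_n^2$, we get, for every $t\leq n$, that $\sum_{i\leq t} e_i(\lambda) \leq \frac{\lambda^2 C_n^2}{2(1-\lambda B)}$ almost surely. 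Plugging this into Proposition~\ref{corollary:main_cor} yields, with probability $1-\delta$ and simultaneously for all $t\leq n$,
\begin{align*}
    \|(\rho I + V_t)^{-1/2}M_t\| \leq \frac{\lambda C_n^2}{2(1-\lambda B)} + \frac{\log(2/\delta)}{\lambda} =: f(\lambda),
\end{align*}
valid for any fixed $\lambda \in (0, 1/B)$ chosen before data collection.

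The second step is to minimize $f(\lambda)$ over $\lambda \in (0,1/B)$. This is the standard Bernstein optimization: writing $L = \log(2/\delta)$, one differentiates or simply uses the known closed form. A clean route is to choose $\lambda = \big(B + C_n/\sqrt{2L}\big)^{-1}$, which lies in $(0,1/B)$; substituting and simplifying the two terms (using $1-\lambda B = \lambda C_n/\sqrt{2L}$, so the first term becomes $\frac{C_n\sqrt{2L}}{2} \cdot \frac{1}{\ ?\ }$ — more precisely $\frac{\lambda C_n^2}{2(1-\lambda B)} = \frac{C_n\sqrt{2L}}{2} = C_n\sqrt{L/2}$, and the second term is $\frac1\lambda L = BL + C_n\sqrt{L/2}$) gives $f(\lambda) = BL + C_n\sqrt{2L}$, i.e.
\begin{align*}
    \sup_{t\leq n} \|(\rho I + V_t)^{-1/2}M_t\| \leq B\log(2/\delta) + C_n\sqrt{2\log(2/\delta)}
\end{align*}
with probability $1-\delta$, which is the first claimed bound.

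For the equivalent tail form, I would invert the deviation bound: set $r = B L + C_n\sqrt{2L}$ and solve for $\delta = 2e^{-L}$ in terms of $r$. From $r = BL + C_n\sqrt{2L}$, treating this as a quadratic in $\sqrt{L}$, one obtains $\sqrt{2L} = \frac{-C_n + \sqrt{C_n^2 + 2Br}}{B}$ (taking the positive root), hence $2L = \frac{\big(\sqrt{C_n^2+2Br}-C_n\big)^2}{B^2}$. Rather than carrying this messy expression, the cleaner argument is to note it suffices to exhibit \emph{some} $L$ with $BL + C_n\sqrt{2L} \leq r$ and $2e^{-L} \leq 2\exp\!\big(-\frac{r^2}{2(C_n^2+Br)}\big)$; taking $L = \frac{r^2}{2(C_n^2+Br)}$ works because $C_n\sqrt{2L} = C_n \frac{r}{\sqrt{C_n^2+Br}} \leq \frac{C_n^2 r}{C_n^2 + Br}\cdot\frac{\sqrt{C_n^2+Br}}{C_n}$... — concretely, one checks $BL + C_n\sqrt{2L} \le r$ directly: $BL = \frac{Br^2}{2(C_n^2+Br)} \le \frac{r}{2}$ and $C_n\sqrt{2L} = \frac{C_n r}{\sqrt{C_n^2+Br}} \le \frac{r}{2}\cdot\frac{2C_n}{\sqrt{C_n^2+Br}}$, and since $C_n \le \sqrt{C_n^2 + Br}$ this last factor is $\le 1$... wait, that gives $\le r/2$ only up to the factor, so $BL + C_n\sqrt{2L} \le r/2 + r/2 = r$. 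Feeding this $L$ into the first bound and using monotonicity of the event $\{\sup_t\|\cdot\| \ge r\}$ gives the stated $2\exp(-r^2/(2(C_n^2+Br)))$.

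The main obstacle is purely the algebraic bookkeeping in the last step — verifying that the chosen $L = r^2/(2(C_n^2+Br))$ indeed satisfies $BL + C_n\sqrt{2L}\le r$ so that the deviation bound at level $\delta = 2e^{-L}$ implies the tail bound at radius $r$. Everything else (applying the earlier proposition, summing the Bernstein MGF bound, and the textbook $\lambda$-optimization) is routine. One should also double-check the edge cases $C_n = 0$ and $B = 0$ so the formulas degenerate correctly.
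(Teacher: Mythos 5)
Your route for the main bound is exactly the paper's: plug the Bernstein bound $e_i(\lambda) \leq \frac{\lambda^2 \sigma_i^2 \|G_i\|^2}{2(1-\lambda B)}$ into Proposition~\ref{corollary:main_cor}, bound $\sum_{i\leq t} e_i(\lambda) \leq \frac{\lambda^2 C_n^2}{2(1-\lambda B)}$ for all $t\leq n$ using nonnegativity of the summands, and optimize the deterministic choice $\lambda = (B + C_n/\sqrt{2L})^{-1}$ with $L=\log(2/\delta)$, which correctly yields $BL + C_n\sqrt{2L}$. That part is fine (modulo the degenerate case $C_n=0$, where one takes $\lambda \uparrow 1/B$ and uses continuity of measure along a decreasing sequence of events — routine, as you note).

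The step that does not hold as written is your verification that $L = \frac{r^2}{2(C_n^2+Br)}$ satisfies $BL + C_n\sqrt{2L} \leq r$. You try to split it as $BL \leq r/2$ and $C_n\sqrt{2L} \leq r/2$; the first half is fine, but the second is false in general: $C_n\sqrt{2L} = \frac{C_n r}{\sqrt{C_n^2+Br}}$, and bounding this by $r/2$ requires $2C_n \leq \sqrt{C_n^2+Br}$, i.e.\ $3C_n^2 \leq Br$, which fails for instance when $B=0$ (there $C_n\sqrt{2L} = r$ exactly, so there is no slack to absorb the other term by a $r/2$-$r/2$ split). The claim you need is nevertheless true, and the correct one-line argument is: writing $D = C_n^2 + Br$, the inequality $\frac{Br^2}{2D} + \frac{C_n r}{\sqrt{D}} \leq r$ is (for $r>0$, $D>0$) equivalent to $2C_n\sqrt{D} \leq 2C_n^2 + Br = C_n^2 + D$, which is the AM--GM inequality $2ab \leq a^2+b^2$ with $a = C_n$, $b=\sqrt{D}$. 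With that repaired (and keeping track of the strict versus non-strict inequality at the boundary, which is harmless), the inversion to the stated tail bound $2\exp\lp -\frac{r^2}{2(C_n^2+Br)}\rp$ goes through as you intended.
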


If the noises $(\epsilon_i)$ are bounded and (an upper bound on) their variance is known, a Bennett-type inequality will be tighter than a Bernstein-type inequality. The proof of the following result, which can be derived similarly to that of the Bernstein-type inequality, is given in Appendix~\ref{proof:fixed_bennett}.

\begin{theorem}[Bennett-type concentration inequality] \label{corollary:fixed_bennett}
    Fix $n > 0$, and let $(X_1, \ldots, X_n)$ and $(\epsilon_1, \ldots, \epsilon_n)$ attain Assumption \ref{assumption:adapted}. Let $|\epsilon_i| \leq B$ and $\V_{i-1} [\epsilon_i] \leq \sigma^2_i$ almost surely (Assumption~\ref{assumption:bounded_variance}). If $\sum_{i \leq n} \sigma_i^2 \| G_i \|^2 \leq C_n^2$ almost surely, where $C_n \geq 0$ is deterministic, then 
    \begin{align*}
        \sup_{t \leq n}\ldba (\rho I+V_t)^{-1/2}M_t\rdba
    \leq \frac{C_n^2}{B} h^{-1}\lp\frac{B^2}{C_n^2}\log\lp\frac{2}{\delta}\rp\rp
    \end{align*}
    with probability at least $1 - \delta$, where $h(u) = (1 + u) \log (1 + u) - u$. Equivalently, the following holds for all $r\ge0$:
    \begin{align*}
    \Pb \lp \sup_{t \leq n} \ldba (\rho I+V_t)^{-1/2}M_t\rdba \geq r \rp \leq 2 \exp \lp - \frac{C_n^2}{B^2} h \lp \frac{Br}{C_n^2}\rp \rp.
    \end{align*}
\end{theorem}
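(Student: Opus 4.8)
The plan is to derive the Bennett-type bound from Proposition~\ref{corollary:main_cor} by plugging in the bounded-variance expression for $e_i(\lambda)$ and then optimizing the free parameter $\lambda$, in complete parallel to the Bernstein proof. By the second displayed consequence of Theorem~\ref{theorem:main_theorem}, under Assumption~\ref{assumption:bounded_variance} we have $e_i(\lambda) \le \|G_i\|^2 \sigma_i^2 \, (e^{\lambda B} - \lambda B - 1)/B^2$, so on the event where Proposition~\ref{corollary:main_cor} holds and using $\sum_{i\le n}\sigma_i^2\|G_i\|^2 \le C_n^2$, one gets, simultaneously for all $t \le n$,
\begin{align*}
    \ldba (\rho I + V_t)^{-1/2}M_t \rdba \le \frac{1}{\lambda}\lp \frac{C_n^2}{B^2}\lp e^{\lambda B} - \lambda B - 1\rp + \log\lp\tfrac{2}{\delta}\rp\rp.
\end{align*}
The supremum over $t \le n$ on the left is harmless because the right-hand side no longer depends on $t$.

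Next I would minimize the right-hand side over $\lambda > 0$. Writing the bound as $f(\lambda) = \lambda^{-1}\big(a(e^{\lambda B} - \lambda B - 1) + L\big)$ with $a = C_n^2/B^2$ and $L = \log(2/\delta)$, the first-order condition $f'(\lambda)=0$ reads $a\,e^{\lambda B}(\lambda B - 1) + a + L = 0$, i.e. $e^{\lambda B}(\lambda B - 1) + 1 = -L/a$... more conveniently, one recognizes the standard Bennett/Cram\'er trick: substitute $\lambda B = \log(1+u)$ for $u \ge 0$. Then $e^{\lambda B} - \lambda B - 1 = u - \log(1+u)$ and the bracket becomes $a(u - \log(1+u)) + L$, while $1/\lambda = B/\log(1+u)$, so the bound is $\tfrac{B}{\log(1+u)}\big(a\,u - a\log(1+u) + L\big)$. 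Optimizing over $u$ (equivalently, choosing $u$ so that the resulting expression matches the classical inverse-$h$ form) gives the value $\tfrac{C_n^2}{B}h^{-1}\big(\tfrac{B^2}{C_n^2}L\big)$ with $h(u) = (1+u)\log(1+u) - u$; concretely the minimizing $u^\star$ solves $h(u^\star) = (B^2/C_n^2)\log(2/\delta)$, and substituting back collapses the bound to $\tfrac{C_n^2}{B}\,u^\star$. This is the same computation underlying the classical Bennett inequality, so I would either carry it out directly or cite the Bernstein-proof appendix for the analogous algebra.

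For the equivalent tail form, I would invert the high-probability statement: the bound $r \le \tfrac{C_n^2}{B}h^{-1}(\tfrac{B^2}{C_n^2}\log(2/\delta))$ holding with probability $1-\delta$ is equivalent, by monotonicity of $h$ (hence of $h^{-1}$) on $[0,\infty)$, to $\Pb(\sup_{t\le n}\|(\rho I+V_t)^{-1/2}M_t\| \ge r) \le 2\exp(-\tfrac{C_n^2}{B^2}h(\tfrac{Br}{C_n^2}))$; one simply solves $r = \tfrac{C_n^2}{B}h^{-1}(x)$ for $x = h(Br/C_n^2)$ and sets $\delta = 2e^{-x}$, checking that $x \ge 0$ for all $r \ge 0$ since $h \ge 0$.

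The main obstacle is purely the optimization bookkeeping: verifying that $f$ is strictly convex in $\lambda$ on $(0, \infty)$ so the stationary point is the global minimum, confirming that the substitution $\lambda B = \log(1+u)$ is a valid reparametrization of $(0,\infty)$ onto $(0,\infty)$, and checking that the optimizing $\lambda$ (or $u$) is admissible — here there is no upper constraint on $\lambda$ since the bounded-variance MGF bound holds for all $\lambda \ge 0$, unlike the Bernstein case where $\lambda < 1/B$ had to be respected. Everything else is a direct substitution into Proposition~\ref{corollary:main_cor} and an elementary manipulation of $h$ and its inverse; no new probabilistic input beyond Theorem~\ref{theorem:main_theorem} is needed.
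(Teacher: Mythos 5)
Your proposal is correct and takes essentially the same route as the paper: substitute the bounded-variance bound $e_i(\lambda)\le \|G_i\|^2\sigma_i^2\,(e^{\lambda B}-\lambda B-1)/B^2$ into Proposition~\ref{corollary:main_cor}, use nonnegativity of the $e_i$ together with the deterministic bound $C_n^2$ so the right-hand side is $t$-free, and minimize over $\lambda$ via $\lambda B=\log(1+u)$, which yields $h(u^\star)=\frac{B^2}{C_n^2}\log\frac{2}{\delta}$ and collapses the bound to $\frac{C_n^2}{B}u^\star$, with the tail form obtained by inverting $h$. The only blemish is a sign slip in the (abandoned) first-order condition, which should read $a\lp e^{\lambda B}(\lambda B-1)+1\rp = L$; this does not affect the substitution argument you actually carry out.
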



We highlight that both Theorem~\ref{corollary:fixed_bernstein} and Theorem~\ref{corollary:fixed_bennett} exploit deterministic upper bounds $C_n$, which require a fixed sample size. These concentration inequalities usually suffice when used to conduct theoretical analyses. However, we should expect these inequalities to be conservative in practice due to their use of a deterministic upper bound $C_n$. Furthermore, they do not easily adapt to the fully sequential setting, where sample sizes are random stopping times. For bounded random noises, and at an expense of a logarithmic term, these two shortcomings can be addressed by means of a mixture method argument.

\begin{theorem} [Mixed Bennett-type concentration inequality] \label{corollary:mixed_bennett}
    Let $(X_1, X_2, \ldots)$ and $(\epsilon_1, \epsilon_2, \ldots)$ attain Assumption \ref{assumption:adapted}, and let $\epsilon_i$ attain Assumption~\ref{assumption:bounded_variance}, i.e., $|\epsilon_i| \leq B$ and $\V_{i-1} [\epsilon_i] \leq \sigma^2_i$. Denote
    \begin{align*}
        s_t = \| \lp \rho I + V_t \rp^{-\frac{1}{2}} M_t \|, \quad \nu_t = \sum_{i \leq t} \sigma_i^2 \| G_i \|^2.
    \end{align*}
    For $\theta > 0$, it holds that
    \begin{align} \label{eq:mixed_bennett_eq}
        \Pb \lp \sup_{t} \frac{\lp \frac{\theta}{B^2} \rp^\frac{\theta}{B^2}}{\Gamma \lp \frac{\theta}{B^2}\rp \overline{\gamma} \lp \frac{\theta}{B^2}, \frac{\theta}{B^2}\rp} \frac{\Gamma \lp  \frac{Bs_t + \nu_t + \theta}{B^2} \rp \overline{\gamma} \lp \frac{Bs_t + \nu_t + \theta}{B^2}, \frac{\nu_t + \theta}{B^2} \rp}{\lp \frac{\nu_t + \theta}{B^2}\rp^{\frac{Bs_t + \nu_t + \theta}{B^2}} } \exp\lp{\frac{\nu_t}{B^2}}\rp < \frac{2}{\delta} \rp \geq 1 - \delta,
    \end{align}
    where $\overline{\gamma}(a, x) := (\int_x^\infty u^{a-1}e^{-u}du)/\Gamma(a)$ is the regularized upper incomplete gamma function.
\end{theorem}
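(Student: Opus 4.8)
The plan is to apply the method of mixtures to the one-parameter family of supermartingales supplied by Theorem~\ref{theorem:main_theorem}: we mix the processes $S_t(\lambda)$ of \eqref{eq:main_process} over $\lambda>0$ against a prior tailored to the Poisson-type cumulant bound for bounded noises, and then invoke Ville's inequality on the resulting mixture. By Theorem~\ref{theorem:main_theorem}, for each fixed $\lambda>0$ the process $S_t(\lambda)$ is a nonnegative supermartingale with $S_0(\lambda)=1$.

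The first step is a clean, data-dependent lower bound on $S_t(\lambda)$. Under Assumption~\ref{assumption:bounded_variance} the inequality recorded immediately after Theorem~\ref{theorem:main_theorem} gives $e_i(\lambda)\le\ldba G_i\rdba^2\sigma_i^2\,\psi(\lambda)$ with $\psi(\lambda):=(e^{\lambda B}-\lambda B-1)/B^2$, hence $\sum_{i\le t}e_i(\lambda)\le\nu_t\,\psi(\lambda)$. Combining this with $\cosh(x)\ge\tfrac12 e^{x}$ and $s_t\ge0$,
\begin{align*}
 S_t(\lambda)\ \ge\ \tfrac12\exp\lp\lambda s_t-\nu_t\psi(\lambda)\rp
 \ =\ \tfrac12\exp\lp\lambda\lp s_t+\tfrac{\nu_t}{B}\rp+\tfrac{\nu_t}{B^2}-\tfrac{\nu_t}{B^2}\,e^{\lambda B}\rp .
\end{align*}
Crucially, since the noise is bounded we have $\psi(\lambda)<\infty$ for \emph{every} $\lambda>0$, so the mixing measure may be supported on all of $(0,\infty)$; this is exactly where boundedness (rather than a merely sub-exponential condition) is used.

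Next I would introduce the mixture. Set $a:=\theta/B^2$, and let $F$ be the law on $(0,\infty)$ obtained by pushing a $\mathrm{Gamma}(a,a)$ variable $u$, conditioned on $\{u>1\}$ (an event of mass $\overline{\gamma}(a,a)$), through $\lambda=B^{-1}\log u$; equivalently $F$ has density on $(0,\infty)$ proportional to $\lp e^{\lambda B}\rp^{a}\exp\lp-a\,e^{\lambda B}\rp$. Put $\overline S_t:=\int_0^\infty S_t(\lambda)\,dF(\lambda)$. Since each $S_t(\lambda)$ is a nonnegative supermartingale, Tonelli's theorem applied to the conditional expectation shows that $\overline S_t$ is a nonnegative supermartingale with $\overline S_0=1$ (a.s.\ finiteness propagating inductively from $\E_{t-1}[\overline S_t]\le\overline S_{t-1}$, aided by $S_t(\lambda)\le\cosh(\lambda s_t)$, which grows only polynomially in $u=e^{\lambda B}$ against the gamma tail). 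The core computation is then to evaluate the mixed lower bound in closed form: substituting $u=e^{\lambda B}$ turns $\int_0^\infty\exp\lp\lambda s_t-\nu_t\psi(\lambda)\rp dF(\lambda)$ into a constant times
\begin{align*}
 e^{\nu_t/B^2}\int_1^\infty u^{\frac{Bs_t+\nu_t+\theta}{B^2}-1}\exp\lp-\tfrac{\nu_t+\theta}{B^2}\,u\rp du ,
\end{align*}
an upper-incomplete gamma integral whose lower limit $u=1$ is the image of the constraint $\lambda>0$. Evaluating this integral and folding in the normalizing factor $a^{a}/(\Gamma(a)\overline{\gamma}(a,a))$ of the truncated gamma density reproduces exactly $\overline S_t\ge\tfrac12\,W_t$, where $W_t$ denotes the quantity appearing inside $\Pb(\,\cdot\,)$ in \eqref{eq:mixed_bennett_eq}. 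Finally, Ville's inequality applied to $\overline S_t$ (which has $\overline S_0=1$) gives $\Pb(\sup_t\overline S_t\ge1/\delta)\le\delta$, and since $\overline S_t\ge\tfrac12 W_t$ this yields $\Pb(\sup_t W_t\ge2/\delta)\le\delta$, i.e.\ the stated bound.

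I expect the main obstacle to be the bookkeeping in that last computation: choosing the mixing density so that the $e^{\lambda B}$ term is linearized by the substitution $u=e^{\lambda B}$, correctly tracking the Jacobian and the stray $e^{\nu_t/B^2}$ factor, and --- most delicately --- keeping the truncation $\lambda>0$ (the reason regularized \emph{upper incomplete} gammas $\overline{\gamma}$ rather than plain $\Gamma$'s appear) consistent between the normalizer of $F$ and the integral defining $\overline S_t$. A secondary point meriting care is the measure-theoretic justification that $\overline S_t$ is a genuine supermartingale: the interchange of conditional expectation with the $\lambda$-integral (clean here by nonnegativity and Tonelli) and the a.s.\ finiteness of the mixture.
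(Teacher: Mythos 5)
Your proposal is correct and follows essentially the same route as the paper: it lower-bounds the Theorem~\ref{theorem:main_theorem} supermartingale via $\cosh(x)\ge\tfrac12 e^{x}$ and the Bennett bound $\sum_{i\le t}e_i(\lambda)\le\nu_t(e^{\lambda B}-\lambda B-1)/B^2$, then applies the truncated gamma--Poisson mixture over $\lambda>0$ and Ville's inequality, which is exactly the paper's combination of Theorem~\ref{theorem:main_theorem} with the mixture argument of \citet[Proposition 10]{howard2021time}. The only difference is that you carry out the mixture integral explicitly (correctly, including the Jacobian, the truncation at $u=1$, and the normalizer $\overline{\gamma}(a,a)$) rather than invoking the cited proposition.
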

Its proof is deferred to Appendix~\ref{proof:mixed_bennett}, and combines Theorem~\ref{theorem:main_theorem} with the Gamma-Poisson mixture argument from \citet[Proposition 10]{howard2021time}. Theorem~\ref{corollary:mixed_bennett} does not offer a closed-form confidence interval, but it can be easily obtained by root finding \citep[Appendix D]{howard2021time}. The hyperparameter $\theta$ can be adjusted by following the considerations from~\citet[Section 3.5]{howard2021time}. While the probabilistic expression is rather difficult to parse, the width of the confidence interval can be studied using the tools from~\citet{howard2021time}. In particular,~\citet[Proposition 2]{howard2021time} establishes that the width of a mixed confidence interval scales roughly (and up to constants) as $\sqrt{\nu_n \log(\nu_n)}$. If $\nu_n \approx C_n^2$, the width of the mixed confidence interval is effectively inflated by a logarithmic factor (up to constants) with respect to the confidence interval given by Theorem~\ref{corollary:fixed_bernstein}. Nonetheless, if $\nu_n$ is substantially smaller than $C_n^2$, the mixed confidence interval will be tighter than the one provided by Theorem~\ref{corollary:fixed_bernstein}. Theorem~\ref{corollary:mixed_bennett} is expected to generally work better in practice, given that the upper bound $C_n$ scales with the worst case scenario. Furthermore, the mixed confidence interval is better suited for the fully sequential setting: at time $t \ll n$, the bound is of order $\sqrt{\nu_t \log(\nu_t)} \ll \nu_n \leq C_n^2$.


Theorem~\ref{corollary:mixed_bennett} still requires knowledge of the variance of the noises, which is unreasonable in practice. We can make it empirical by coupling it with a concentration inequality for the variance of the noises, at the expense of a small inflation of the logarithmic factor, which stems from applying the union bound to the two concentration inequalities. The result is exhibited in the following theorem, and its proof can be found in Appendix~\ref{proof:mixed_empirical_bennett}.

\begin{theorem} [Mixed empirical Bennett-type concentration bound] \label{corollary:mixed_empirical_bennett}
    Let both $(X_1, X_2, \ldots)$ and $(\epsilon_1, \epsilon_2, \ldots)$ attain Assumption \ref{assumption:adapted}, and let $\epsilon_i$ attain Assumption~\ref{assumption:bounded_variance} with constant variance, i.e.,  $|\epsilon_i| \leq B$ and $\V_{i-1} [\epsilon_i] = \sigma^2$. Denote 
    \begin{align*}
        s_t = \| \lp \rho I + V_t \rp^{-\frac{1}{2}} M_t \|, \quad \hat\nu_t = \hat\sigma_{u, t, \delta_1}^2\sum_{i \leq t}  \| G_i \|^2,
    \end{align*}
   where $\hat\sigma_{u, t, \delta_1}$ is a $1 - \delta_1$ upper confidence bound for $\sigma^2$, i.e., $\Pb \lp \hat\sigma_{u, t, \delta_1}^2 < \sigma^2 \text{ for all } t \rp \geq 1 - \delta_1$. For $\theta > 0$, the time-uniform concentration inequality \eqref{eq:mixed_bennett_eq} holds with $\delta_2$ replacing $\delta$ in the left hand side, and $\delta_1+\delta_2$ replacing $\delta$ in the right hand side.
\end{theorem}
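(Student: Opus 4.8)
The plan is to reduce to Theorem~\ref{corollary:mixed_bennett}, instantiated at confidence level $\delta_2$, and to account for replacing the true pseudo-variance process by an empirical over-estimate by means of a union bound over the failure event of the variance upper confidence bound; the one ingredient that needs care is that the quantity appearing in \eqref{eq:mixed_bennett_eq} is monotone in its pseudo-variance argument. Write $\Phi(s,v)$ for the expression inside the probability in \eqref{eq:mixed_bennett_eq}, viewed as a function of $s=s_t\ge 0$ and $v\ge 0$ with $B,\theta$ fixed, so that Theorem~\ref{corollary:mixed_bennett} reads $\Pb\lp\sup_t\Phi(s_t,\nu_t)<2/\delta\rp\ge 1-\delta$, with $\nu_t$ the pseudo-variance process of that theorem. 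First I would record the reduction: since the conditional variances equal the constant $\sigma^2$, that pseudo-variance is $\nu_t=\sigma^2\sum_{i\le t}\|G_i\|^2$, whereas $\hat\nu_t=\hat\sigma_{u,t,\delta_1}^2\sum_{i\le t}\|G_i\|^2$; hence on the event $A:=\{\hat\sigma_{u,t,\delta_1}^2\ge\sigma^2\text{ for all }t\}$, which by hypothesis has probability at least $1-\delta_1$, we have $\hat\nu_t\ge\nu_t$ for every $t$.

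The substantive step is to show that $v\mapsto\Phi(s,v)$ is nonincreasing for each fixed $s\ge 0$. Rather than differentiating the incomplete-gamma expression directly --- its $\exp(\nu_t/B^2)$ factor makes monotonicity non-obvious at first glance --- I would use the construction behind Theorem~\ref{corollary:mixed_bennett}: there $\Phi(s_t,\nu_t)$ is, up to a multiplicative constant independent of $\nu_t$, the closed form of the Gamma--Poisson mixture $\int e^{\lambda s_t}\exp\lp-\psi(\lambda)\nu_t\rp\,d\pi_\theta(\lambda)$ of the supermartingales of Theorem~\ref{theorem:main_theorem} (the factor $2$ on the right of \eqref{eq:mixed_bennett_eq} arising from $\cosh(x)\ge e^x/2$), where $\psi(\lambda)=(e^{\lambda B}-\lambda B-1)/B^2\ge 0$ is such that $\sum_{i\le t}e_i(\lambda)\le\psi(\lambda)\,\nu_t$ under Assumption~\ref{assumption:bounded_variance}, and $\pi_\theta$ is the mixing measure from \citet[Proposition~10]{howard2021time}. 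Differentiating under the integral, $\partial_v\Phi(s,v)$ is a positive constant times $\int e^{\lambda s}\,(-\psi(\lambda))\exp\lp-\psi(\lambda)v\rp\,d\pi_\theta(\lambda)\le 0$, since $\psi\ge 0$. This is the step I expect to be the main obstacle, in the sense that it is the only one requiring a look into the structure of the mixture; everything else is bookkeeping.

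Finally I would assemble the pieces. Instantiating Theorem~\ref{corollary:mixed_bennett} at level $\delta_2$ produces an event $E:=\{\sup_t\Phi(s_t,\nu_t)<2/\delta_2\}$ with $\Pb(E)\ge 1-\delta_2$. On $A\cap E$, for every $t$ the monotonicity just established together with $\hat\nu_t\ge\nu_t$ (valid on $A$) gives $\Phi(s_t,\hat\nu_t)\le\Phi(s_t,\nu_t)$, hence $\sup_t\Phi(s_t,\hat\nu_t)\le\sup_t\Phi(s_t,\nu_t)<2/\delta_2$. Since $\Pb(A\cap E)\ge 1-\Pb(A^c)-\Pb(E^c)\ge 1-\delta_1-\delta_2$, we conclude $\Pb\lp\sup_t\Phi(s_t,\hat\nu_t)<2/\delta_2\rp\ge 1-\delta_1-\delta_2$, which is exactly \eqref{eq:mixed_bennett_eq} with $\delta_2$ in place of $\delta$ on the left-hand side and $\delta_1+\delta_2$ in place of $\delta$ on the right-hand side.
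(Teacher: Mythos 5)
Your proposal is correct and follows essentially the same route as the paper: instantiate Theorem~\ref{corollary:mixed_bennett} at level $\delta_2$, intersect with the event that the variance upper confidence sequence is valid (probability at least $1-\delta_1$, so a union bound gives $1-\delta_1-\delta_2$), and use that the mixture expression in \eqref{eq:mixed_bennett_eq} is nonincreasing in its pseudo-variance argument, so that replacing $\nu_t$ by the larger $\hat\nu_t$ only decreases it. One minor simplification: the monotonicity step needs no differentiation under the integral sign, since $\psi(\lambda)=(e^{\lambda B}-\lambda B-1)/B^2\ge 0$ makes the integrand $e^{\lambda s-\psi(\lambda)v}$ pointwise nonincreasing in $v$ for every $\lambda$ in the support of the mixing measure, hence the mixture itself is nonincreasing in $v$.
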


An upper confidence sequence for $\sigma^2$ can be obtained using the inequalities from  
\citet[Corollary 4.3]{martinez2025sharp}, which are sharper than previous results from \citet{audibert2009exploration} and \citet{maurer2009empirical}, and allow for non-constant expectation of the outcomes. 

\subsection{Comparison to existing work} \label{section:comparison_existing_work_main_body}

We present here an extended comparison of our contributions to existing work. For simplicity,  we assume for now that the conditional variances of the noises are constant, i.e. $\sigma_i = \sigma$, that the vectors $X_i$ are deterministic such that $\|X_i\| \leq L$ for all $i$, and that $\rho \geq 1 \vee L^2$. We focus on comparing our Bernstein-type concentration inequality (Theorem~\ref{corollary:fixed_bernstein}) to related works.

In particular, note that the dominating term of the confidence interval given by Theorem~\ref{corollary:fixed_bernstein} is 
\begin{align*}
C_n \sqrt{2 \log (2 / \delta)} = \sigma \sqrt{\sum_{i \leq n}\|G_i\|^2}\sqrt{2 \log (2 / \delta)}.
\end{align*}
In contrast, \citet[Theorem 1]{abbasi2011improved}  establishes an interval of 
\begin{align*}
R \sqrt{\log\frac{\det ( I + \rho^{-1} V_n  )}{\delta^2}},
\end{align*}
where $R$ is the sub-Gaussian parameter. \citet[Lemma 11]{abbasi2011improved} proves that for $G_i' := ( \rho I + V_{i-1})^{-\frac{1}{2}} X_i$, it follows that
\begin{align*}
\log \det ( I + \rho^{-1} V_n ) \leq \sum_{i \leq n} \| G_i' \|^2 \leq 2\log \det ( I + \rho^{-1} V_n ),
\end{align*}
where the second inequality holds if $\rho \geq 1 \vee L^2$. Since
\begin{align*}
\frac{1}{2}\|G_i'\|^2 \leq \frac{\rho}{\rho + L^2}\|G_i'\|^2 \leq \|G_i\|^2 \leq \|G_i'\|^2,
\end{align*}
we finally obtain that 
\begin{align*}
\frac{1}{2}\sum_{i \leq n} \| G_i \|^2 \leq \log \det ( I + \rho^{-1} V_n  ) \leq 2 \sum_{i \leq n} \| G_i \|^2.
\end{align*}
Thus, we see that our process $\sum_{i \leq n}\|G_i\|^2$ and Abbasi-Yadkori's   $\log \det ( I + \rho^{-1} V_n  )$ are within a multiplicative factor of $2$ from each other.
This also elucidates why Abbasi-Yadkori's bound is sharper when the variance and sub-Gaussian parameter are close (i.e., $R \approx \sigma$), given our slightly bigger constants.

We shall remind the reader that our inequalities are not tailored to sub-Gaussian noises as above, but rather Bernstein condition or bounded noises (where we aim to adapt to the lower variance). Given that other previous efforts have focused on the latter, we will restrict the comparison here to our bounded noise Bernstein-type inequality. First, let us note that our Theorem~\ref{corollary:fixed_bernstein} strictly improves on \citet[Theorem 4.1]{zhou2021nearly}, which has an extra logarithmic term in the sample size, on top of looser constants. In the setting of dimension-free Bernstein bounds, we also highlight three recent dimension-free contributions. \citet[Theorem 5.1]{metelli2025generalized}  provides a Bernstein-like dimension-free self-normalized concentration inequality, where the square root of $\log \det ( I + \rho^{-1} V_n  )$ appears multiplied by a factor of $\sqrt{73 \log \frac{\pi^2 (\rho + 1)^2}{3 \delta}}$, which is clearly larger than our aforementioned factor of $\sqrt{2 \log(2 /\delta)}$. \citet[Theorem 3.2]{akhavan2025bernstein} also provides a dimension-free Bernstein bound. For a fixed $\rho$, their inequality scales with $\log \det (I + \rho^{-1} V_n  )$, as opposed to its square root (thus being substantially looser for big $n$).\footnote{The dimension-free Bernstein inequality from \citet[Theorem 3.2]{akhavan2025bernstein}  is a specific instance of their more general \citet[Theorem 3.1]{akhavan2025bernstein} after invoking  \citet[Theorem 1]{faury2020improved}. While their techniques can certainly lead to other dimension-free concentration inequalities, we restrict the discussion to the inequalities explicitly established therein.} Lastly, \citet{chugg2025variational}  also developed dimension-free Bernstein bounds. While the bounds are explicit, it is not straightforward to analyze them because they depend on the inverse of a tail decay function and some ratio of eigenvalues. 

We do not argue, however, that our inequalities always dominate these recent alternative approaches. Above, we had assumed that the $X_i$ are deterministic so that the sum of $\|G_i\|^2$ was a constant, avoiding the need to assume an upper bound on it. An identical discussion would have also followed if we were seeking deterministic upper bounds for the inequalities, such as the regret analysis for the linear bandit problem exhibited in Section~\ref{section:bandits}. Nonetheless, in practical settings, one would like to adapt to the predictable process $\sum_{i \leq n}\|G_i\|^2$. While some of the alternative approaches do so inherently, we need to rely on a mixture argument (as explained in Section~\ref{section:corollaries}), and so we suffer from an extra logarithmic factor. While the magnitude of this inflation is expected to be very mild given its logarithmic nature, it implies that our inequalities do not necessarily dominate their competitors.

\section{Applications to online linear regression} \label{section:applications}

\subsection{Confidence ellipsoids for online linear regression}

Ellipsoids naturally appear in the context of linear regression. To be more precise, let us first revisit linear regression in the finite-dimensional case with Gaussian noise (we roughly follow the discussion presented in~\citet[Section 5.1]{whitehouse2023time} to motivate some applications of our results). That is, let $H = \R^d$, $\mathbf{Y}_t = (Y_1, \ldots, Y_t) \in \R^t$, and $\mathbf{X}_t = (X_1, \ldots, X_t)^T \in \R^{t \times d}$, such that
\begin{align*}
    \mathbf{Y}_t = \mathbf{X}_t \theta^* + \epsilon_{1:t}, \quad \epsilon_{1:t} \sim N(0, \sigma^2 I_t).
\end{align*}
The least square estimate for $\theta^* \in \R^d$ is given by $\theta_t(0)$, where $\theta_t(\rho):= (\mathbf{X}_t^T\mathbf{X}_t + \rho I)^{-1}\mathbf{X}_t^T \mathbf{Y}_t$. Assuming that $\mathbf{X}_t^T\mathbf{X}_t$ is full rank, it satisfies 
\begin{align*}
    \ldba (\mathbf{X}_t^T\mathbf{X}_t)^{\frac{1}{2}} (\theta_t(0) - \theta^*) \rdba \sim \chi_d^2.
\end{align*}
Consequently, in order to conduct inference on $\theta^*$, a confidence set can be taken to be an ellipsoid centered at $\theta_t(0)$ and thresholded at some quantile of $\chi_d^2$. Nonetheless, such an ellipsoid fails to be a nonasymptotic confidence set if certain parametric assumptions of linear regression are not attained. In contrast, we can consider a more general sequential setting without assuming homoscedastic Gaussian noises, where the samples simply attain Assumption~\ref{assumption:adapted}, and the Hilbert space $H$ is of arbitrary dimensions. In the sequential setting, confidence sets are often required to hold uniformly over time, and so the problem is conventionally termed ``online'' linear regression. Online confidence sets can be obtained from our self-normalized inequalities as exhibited in the following corollary; its proof is based on a simple triangle inequality and can be found in Appendix~\ref{proof:corollary:online_linear_regression}.

\begin{corollary} \label{corollary:online_linear_regression}
    Let $Y_t = X_t^T \theta^* + \epsilon_t$, where the random sequences $(X_t)$ and $(\epsilon_t)$  fulfill Assumption~\ref{assumption:adapted}, and $\|\theta^*\| \leq D < \infty$. If $J_t(\delta)$ is a $1-\delta$ upper confidence bound for $\ldba (\rho I + V_t)^{-1/2}M_t\rdba$ obtained from one of Theorem~\ref{corollary:fixed_bernstein}, Theorem~\ref{corollary:fixed_bennett}, Theorem~\ref{corollary:mixed_bennett}, or Theorem~\ref{corollary:mixed_empirical_bennett}, then
    \begin{align*}
        \Pb \lp \sup_t \ldba (\rho I + V_t)^{1/2}(\theta_t(\rho) - \theta^*) \rdba -J_t(\delta)  \leq \rho^{1/2}D  \rp \geq 1 - \delta. 
    \end{align*}
\end{corollary}

\subsection{Applications to (kernelized) linear bandits} \label{section:bandits}

Online linear regression has immediate applications in  (kernelized) linear bandits. In the linear bandit problem, a learner repeatedly chooses actions represented by feature vectors and observes noisy rewards that are assumed to depend linearly on an unknown parameter vector. The goal is to balance exploration (learning about the parameter) and exploitation (selecting actions with high expected reward) in order to minimize cumulative regret. The Gaussian Process Upper Confidence Bound (GP-UCB) algorithm achieves this by maintaining confidence ellipsoids around the estimated parameter and selecting the action with the highest optimistic reward estimate \citep{srinivas2009gaussian, chowdhury2017kernelized, whitehouse2023sublinear}.

Mathematically, in each round $t \in [T]$, the learner uses previous observations to select an action $X_t \in \X$, where $\X$ is a bounded subset of $H$, and then observes feedback $Y_t := \la X_t, \theta^* \ra   + \epsilon_t$. It is assumed that $\ldba \theta^*  \rdba \leq D < \infty$. The learner aims to minimize (with high probability) the regret at time $T$, which is defined as 
\begin{align*}
    R_T := \sum_{t=1}^T r_t, \quad r_t := \la x^*, \theta^* \ra - \la X_t, \theta^* \ra,
\end{align*}
where $x^* := \arg\max_{x \in \X} \la x,  \theta^* \ra$. Let
\begin{align*}
     \Pi_t(h, \eta):= \lc f \in H:  \ldba (V_t + \rho I)^{1/2}(f - h) \rdba \leq \eta\rc
\end{align*}
denote an ellipsoid in $H$ centered at $h$.  Following the optimism principle, the GP-UCB takes
\begin{align} \label{eq:action_taken}
    (X_t, \tilde \theta_t) = \arg\max_{x \in \X, f\in \Pi_{t-1}\lp \theta_{t-1}(\rho), \eta_{t-1}^{\text{subG}}\rp}  \la f,  x \ra,
\end{align}
where $\eta_t^{\text{subG}}$ is obtained from a sub-Gaussian concentration inequality \citep{abbasi2011improved}. Considering $\delta$ as a constant for simplicity, GP-UCB attains the regret bound 
\begin{align*}
    R_T = \bigo \lp B \gamma_T(\rho) \sqrt{T} + \sqrt{\rho \gamma_T(\rho) T} \rp,
\end{align*}
for $B$-bounded random noises \citep[Theorem 2]{whitehouse2023sublinear}, given that the bound is proportional to the sub-Gaussian parameter. 

We now consider variants of the GP-UCB procedure, where the threshold $\eta_t$ is obtained in view of any of our novel inequalities. For the fixed Bernstein and Bennett inequalities, the regret bound is of a similar order, but with the bound of the noises replaced by their variance. For the mixed inequalities, we obtain an extra logarithmic factor. We formalize the result in the following corollary; its proof, provided in Appendix~\ref{proof:regret_bound}, follows standard arguments. 

\begin{corollary} \label{corollary:regret_bound}
    Let $(X_1, \ldots, X_T)$ and $(\epsilon_1, \ldots, \epsilon_T)$ attain Assumption \ref{assumption:adapted}, with $\epsilon_i$ attaining Assumption~\ref{assumption:bernstein_condition} or Assumption~\ref{assumption:bounded_variance} with constant $\sigma$. Consider variants of the GP-UCB algorithm with $\eta_t$ taken as $J_t(\delta) + \rho^{1/2}D$, where $J_t(\delta)$ is a $1-\delta$ upper confidence bound for $\ldba (\rho I + V_t)^{-1/2}M_t\rdba$ obtained from one of our self-normalized concentration inequalities. If $J_t(\delta)$ is defined following Theorem~\ref{corollary:fixed_bernstein} (if Assumption~\ref{assumption:bernstein_condition} holds) or Theorem~\ref{corollary:fixed_bennett} (if Assumption~\ref{assumption:bounded_variance} holds), then
    \begin{align*}
        R_T = \bigo \lp \sigma \gamma_T(\rho) \sqrt{T} + \sqrt{\rho \gamma_T(\rho) T} \rp.
    \end{align*}
    If Assumption~\ref{assumption:bounded_variance} holds, and $J_t(\delta)$ is defined following Theorem~\ref{corollary:mixed_bennett} or Theorem~\ref{corollary:mixed_empirical_bennett}, the above guarantee holds up to logarithmic factors.
\end{corollary}
Under Assumption~\ref{assumption:bernstein_condition} (sub-exponential noise distributions), our analysis yields regret bounds that go beyond the commonly studied bounded or sub-Gaussian noise regimes. To the best of our knowledge, such guarantees have not appeared previously in the literature. Under Assumption~\ref{assumption:bounded_variance}, these variance-dependent type bounds are usually referred to as ``second-order'' regret guarantees \citep{kirschner2018information, zhang2021variance, xu2023noise, jun2024noise, pacchiano2024second}, with the case $H=\R^d$ and constant $\rho$ being predominantly studied in the literature. Given that $\gamma_T(\rho) = \bigo(d)$ up to logarithmic factors, Corollary~\ref{corollary:regret_bound} yields a regret bound of $\bigo (d\sigma\sqrt{T} + \sqrt{dT})$ up to logarithmic factors, immediately recovering a regret bound comparable to many existing works \citep{zhou2021nearly, zhou2022computationally, kim2022improved, zhao2023variance} for constant variance. 



\subsection{Experiments} \label{section:experiments}

In order to elucidate the empirical differences between our variance-dependent inequalities and the predominant sub-Gaussian ones, we run an ablation study on the GP-UCB algorithm for the linear bandit problem using the sub-Gaussian inequality from \citet{abbasi2011improved}. We also evaluate our mixed Bennett inequality and our empirical mixed Bennett inequality. The code can be found at \href{https://github.com/DMartinezT/self_normalized}{https://github.com/DMartinezT/self\_normalized}. 

More specifically, we consider a bandit experiment, where at each round $t$ an action $X_t$ is taken following the UCB procedure introduced in \eqref{eq:action_taken}. The covariates are RBF kernel embeddings of one-dimensional points, 
where the kernel length scale set to $0.01$. The bound of the kernel is naturally $1$. We take $\rho = 0.05$, $\delta = 2\delta_1 = 2\delta_2 = 0.1$, and the mixing hyperparameter $\theta = 1$. The true regression function is given by a weighted sum of $50$ embeddings of different points, which are randomly drawn concentrated around two modes (such that it has two local maxima for the sake of visualization). 
For the empirical mixed Bennett-type inequality, $\hat\sigma_{u, t, \delta_1}^2$ is obtained using the upper confidence sequence from \citet{martinez2025sharp} with $\hat\mu_i$ and $\bar\mu_i$ taken  as the evaluation of $\theta_t(\rho)$ on $X_i$.
No effort has been put into optimizing the hyperparameter choices. 


\begin{figure}[ht] 
    \centering
  \includegraphics[width=\textwidth]{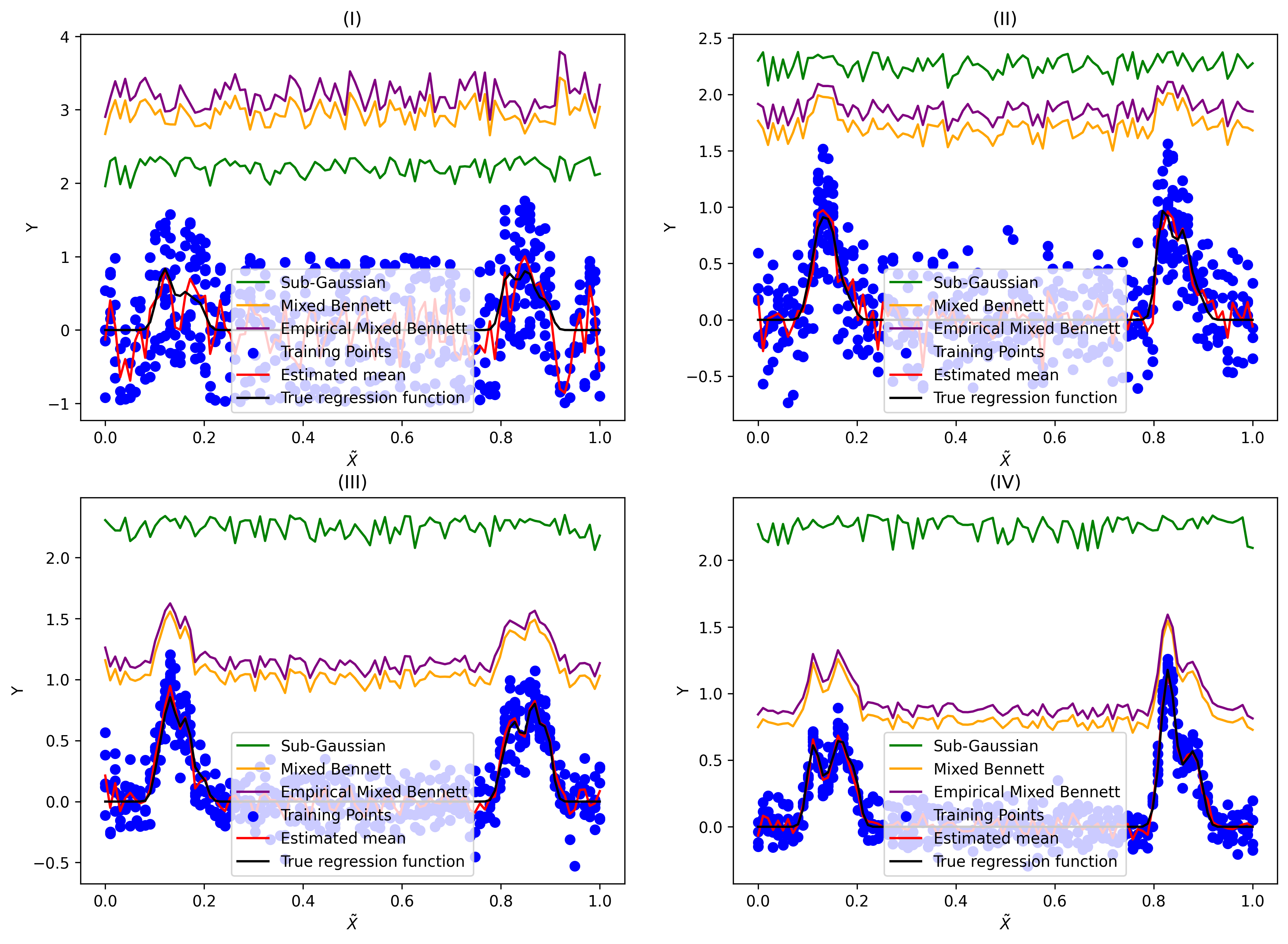}
  \caption{Illustration of the optimistic upper confidence bounds for the regression function after $500$ rounds using sub-Gaussian, mixed Bennett, and empirical mixed Bennett inequalities for noises following (I) a rescaled uniform distribution, (II) a rescaled ($5, 5$)-beta distribution, (III) a rescaled ($20, 20$)-beta distribution, and (IV) a rescaled ($50, 50$)-beta distribution. Training points are drawn following a UCB procedure, with the covariates $X_t = k(\cdot, \tilde X_t)$ illustrated in the original space (pre-embedded in the RKHS). }
  \label{fig:experiments_four}
\end{figure}

We consider four experimental settings, where the outcomes correspond to the evaluation of the regression function added to different random noise, which follows either a uniform distribution or beta distributions with different parameters. All of them are rescaled to lie in the interval $(-1, 1)$. Figure~\ref{fig:experiments_four} illustrates the true regression function, the training points, and the estimated mean of the bandit algorithm after $500$ rounds. We generally see that the smaller the variance (with respect to the scale of the regression function and the bound of the noise), the better our inequalities perform in comparison to the sub-Gaussian inequalities from \citet{abbasi2011improved}. In particular, we observe that for comparatively small variance, i.e. plots (II), (III), and (IV), our inequalities lead to sharper bounds of the regression function, elucidating the empirical gains of using variance-aware inequalities. In contrast, our inequalities are empirically outperformed by the sub-Gaussian approach for uniform-distributed noises (comparatively larger
variance), where the difference between the variance and bound is not large enough to justify the extra logarithmic term of our mixed inequalities.



\section{Conclusion} \label{section:conclusion}

We have proposed novel concentration inequalities for vector-valued self-normalized processes. These include anytime-valid Bernstein and Bennett type inequalities tailored to a fixed sample size (Proposition~\ref{corollary:fixed_bernstein} and Proposition~\ref{corollary:fixed_bennett}, respectively), a mixed Bennett-type inequality that is well suited for randomly stopped sample sizes, and an empirical mixed Bennett-type inequality that does not require knowledge of the variance in advance. These inequalities build on the theoretical tools from \citet{pinelis1994optimum}, thus being fundamentally different to previous vector-valued self-normalized inequalities. We have further explored the immediate consequences of our inequalities in the (kernelized) bandit setting, both theoretically in the form of second order regret bounds and empirically. 

There are several directions for future work. First, the proposed inequalities can have interesting applications beyond linear bandits. Natural extensions may include reinforcement learning \citep{yang2020function, vakili2024kernelized}, safe Bayesian optimization \citep{chowdhury2017kernelized, amani2020regret}, and autoregressive models  \citep{darolles2006structural, pacurar2008autoregressive, shao2015self, agarwal2018model}. Second, our empirical Bennett inequality assumes constant variance for the noise, so we can exploit the upper confidence sequences from \citet[Corollary 4.3]{martinez2025sharp}. However, there is evident interest in bandit algorithms that adapt to heteroscedastic noises \citep{kirschner2018information, kim2022improved, zhao2023variance}. Extending \citet[Corollary 4.3]{martinez2025sharp} to a covariate-dependent inequality would immediately yield heteroscedastic-noise guarantees when in conjunction with Proposition~\ref{corollary:mixed_bennett}. Lastly, our inequalities could potentially be sharpened if the loose bound $V_{t-1} \preceq V_t$ was to be avoided in the supermartingale construction (see proof of Theorem~\ref{theorem:main_theorem}). This limitation may inflate the growth of the inequalities by a logarithmic term (see Appendix~\ref{section:limitations} for a detailed presentation), which implies that sub-Gaussian inequalities are sharper than our mixed inequalities if the variance of the noise is not substantially smaller than the bound of the noise. Finding a refined construction that circumvents this limitation constitutes an important and difficult open direction for future work.

\subsection*{Acknowledgements}

DMT gratefully
acknowledges that the project that gave rise to these results received the support of a fellowship
from ‘la Caixa’ Foundation (ID 100010434). The fellowship code is LCF/BQ/EU22/11930075. AR was funded by NSF grant DMS-2310718.

\bibliographystyle{apalike}
\bibliography{bib}

\appendix

\section{Proofs} \label{section:proofs}

\subsection{Proof of Proposition \ref{corollary:main_cor}} \label{proof:main_cor}

Extend $S_t$ to $t = 0$ with $\lambda_0 = 0$ and $X_0 = 0$. It follows that $S_t$ is a nonnegative supermartingale with $S_0 = 1$. Consequently, Ville's inequality (Fact \ref{fact:villesineq}) yields 
\begin{align*}
    \Pb \lp \sup_{t } S_t \geq \frac{1}{\delta} \rp \leq \E[S_0] \delta = \delta.
\end{align*}

Given that $e^u \leq 2 \cosh(u)$ for all $u \in \R$,  it follows from Theorem \ref{theorem:main_theorem} that 
\begin{align*}
    \frac{1}{2}\exp \lp \ldba  \lambda (\rho I + V_t)^{-1/2}M_t  \rdba \rp  \exp\lp -\sum_{i = 1}^te_i (\lambda) \rp
\end{align*}
is dominated by $S_t$. Thus, with probability $1-\delta$, and simultaneously for all $t \geq 1$,
\begin{align*}
   \frac{1}{2}\exp \lp \ldba \lambda (\rho I + V_t)^{-1/2}M_t  \rdba \rp  \exp\lp -\sum_{i = 1}^te_i(\lambda)  \rp \leq \frac{1}{\delta}.
\end{align*}

Taking logarithms and dividing both sides by $\lambda$, it follows that
\begin{align*}
        \ldba (\rho I + V_t)^{-1/2}M_t \rdba  \leq \frac{ \sum_{i=1}^t e_i (\lambda) + \log \lp \frac{2}{\delta}\rp}{ \lambda}.
    \end{align*}

\subsection{Proof of Corollary~\ref{corollary:lil}} \label{proof:lil}
Throughout, we refer to a stochastic process as $l_0$-sub-$\psi$ following \citet[Definition 1]{howard2021time}. Denote
\begin{align*}
   \psi_{G, B}(\lambda) := \frac{\lambda^2}{2(1 - \lambda B)}\sigma_t^2 , \quad 0 \leq \lambda < \frac{1}{B}; \quad \psi_{P, B}(\lambda) :=  \frac{e^{\lambda B} - \lambda B - 1}{B^2}, \quad \lambda \geq 0.
\end{align*}
Based on $e^x \leq 2 \cosh x$, Theorem~\ref{theorem:main_theorem} implies that 
\begin{align*}
    \tilde S_t = \exp \lp \lambda \ldba (\rho I + V_t)^{-1/2}M_t  \rdba  - \sum_{i \leq t} e_i(\lambda)    \rp
\end{align*}
is dominated by the nonnegative supermartingale $2S_t$. If Assumption~\ref{assumption:bernstein_condition} holds, then
\begin{align*}
    \sum_{i \leq t} e_i(\lambda) \leq \psi_{G, B}(\lambda) \sum_{i \leq t} \|G_i\|^2 \sigma_i^2,
\end{align*}
and $\ldba (\rho I + V_t)^{-1/2}M_t  \rdba$ is 2-sub-$\psi_{G, B}$ with variance process $\sum_{i \leq t} \|G_i\|^2 \sigma_i^2$. If  Assumption~\ref{assumption:bounded_variance} holds,
\begin{align*}
    \sum_{i \leq t} e_i(\lambda) \leq \psi_{P, B}(\lambda) \sum_{i \leq t} \|G_i\|^2 \sigma_i^2,
\end{align*}
and $\ldba (\rho I + V_t)^{-1/2}M_t  \rdba$ is 2-sub-$\psi_{P, B}$ with variance process $\sum_{i \leq t} \|G_i\|^2 \sigma_i^2$. In either case, \citet[Corollary 1]{howard2021time} can be applied to conclude the result, in view of $\psi_{G, B}(\lambda) \approx \lambda^2 / 2 \approx \psi_{P, B}(\lambda)$ as $\lambda \downarrow 0$.

\subsection{Proof of Theorem~\ref{corollary:fixed_bernstein}} \label{proof:fixed_bernstein}

It follows from Proposition~\ref{corollary:main_cor} that
\begin{align*}
    \ldba (\rho I + V_t)^{-1/2}M_t \rdba  \leq \frac{ \sum_{i=1}^t e_i(\lambda) + \log \lp \frac{2}{\delta}\rp}{\lambda}.
\end{align*}
simultaneously for all $t \geq 1$ with probability $1 - \delta$. As observed in Section~\ref{section:main_theorem}, 
\begin{align*}
    e_i(\lambda) \leq \frac{\lambda^2}{2(1 - \lambda B)}\|G_i\|^2\sigma_i^2 
\end{align*}
for $\lambda \in (0, \frac{1}{B})$, and so 
\begin{align*}
    \sup_{t \leq n}\ldba (\rho I + V_t)^{-1/2}M_t \rdba  &\leq \sup_{t \leq n} \frac{ \frac{\lambda^2}{2(1 - \lambda B)}\sum_{i=1}^t \sigma_i^2 \|G_i\|^2 + \log \lp \frac{2}{\delta} \rp }{\lambda}
    \\ &\leq \frac{ \frac{\lambda^2}{2(1 - \lambda B)}C_n^2 + \log \lp\frac{2}{\delta}\rp}{\lambda}.
\end{align*}
Now we optimize over $\lambda \in (0,1/B)$. Denote $L := \log \lp\frac{2}{\delta} \rp$. Consider the function
\[
f(\lambda) := \frac{ \tfrac{\lambda^2}{2(1 - \lambda B)} C_n^2 + L}{\lambda}
= \frac{C_n^2\lambda}{2(1-\lambda B)} + \frac{L}{\lambda}, 
\quad \lambda \in (0,1/B).
\]
Writing \(f\) in terms of \(x:=\lambda^{-1}\) gives
\[
f\big(\tfrac{1}{x}\big)
= \frac{C_n^2}{2(x-B)} + L x,\quad x>B.
\]
Differentiating with respect to $x$ yields
\[
\frac{d}{dx}\!\Big( \frac{C_n^2}{2(x-B)} + L x\Big)
= -\frac{C_n^2}{2(x-B)^2} + L,
\]
and equaling it to $0$ leads to the minimizer 
\[
x^\ast = B + \sqrt{\frac{C_n^2}{2L}}.
\]
Thus
\[
\begin{aligned}
\min_{0<\lambda<1/B} f(\lambda)
&= f\big(\tfrac{1}{x^\ast}\big)
= L x^\ast + \frac{C_n^2}{2(x^\ast-B)}
\\&= L\Big(B + \sqrt{\frac{C_n^2}{2L}}\Big) + \frac{C_n^2}{2\sqrt{C_n^2/(2L)}}
\\&= B L + \sqrt{2 C_n^2 L}.
\end{aligned}
\]
We thus conclude that
\begin{align*}
    \sup_{t \leq n}\ldba (\rho I + V_t)^{-1/2}M_t \rdba  &\leq B \log \lp \frac{2}{\delta} \rp + \sqrt{2 C_n^2 \log \lp \frac{2}{\delta} \rp }.
\end{align*}

\subsection{Proof of Theorem~\ref{corollary:fixed_bennett}} \label{proof:fixed_bennett}

It follows from Proposition~\ref{corollary:main_cor} that
\begin{align} \label{eq:pre-bennett}
    \ldba (\rho I + V_t)^{-1/2}M_t \rdba  \leq \frac{ \sum_{i=1}^t e_i(\lambda) + \log \lp \frac{2}{\delta} \rp}{\lambda}.
\end{align}
simultaneously for all $t \geq 1$ with probability $1 - \delta$. As observed in Section~\ref{section:main_theorem}, 
\begin{align*}
    e_i(\lambda) \leq \frac{e^{\lambda B} - \lambda B - 1}{B^2}\sigma_i^2 \|G_i\|^2,
\end{align*}
and so 
\begin{align*}
    \sup_{t \leq n}\ldba (\rho I + V_t)^{-1/2}M_t \rdba  &\leq \sup_{t \leq n} \frac{ \frac{e^{\lambda B} - \lambda B - 1}{B^2}\sum_{i=1}^t \sigma_i^2 \|G_i\|^2 + \log \lp \frac{2}{\delta}\rp}{\lambda}
    \\ &\leq \frac{ \frac{e^{\lambda B} - \lambda B - 1}{B^2}C_n^2 + \log \lp \frac{2}{\delta} \rp }{\lambda}.
\end{align*}
Denote
\[
L := \log \lp\frac{2}{\delta}\rp,
\quad s := B\lambda,
\quad \phi(s) := e^s - s - 1,
\quad A := \frac{C_n^2}{B^2}.
\]
It follows that 
\[
\sup_{t \leq n} \ldba (\rho I + V_t)^{-1/2} M_t \rdba
\leq B g(s), 
\quad
g(s) := \frac{A\phi(s) + L}{s}.
\]
The best bound is obtained by minimizing $g(s)$ over $s>0$. Differentiating gives
\[
g'(s) 
= \frac{A \phi'(s)s - (A \phi(s)+L)}{s^2}.
\]
Setting $g'(s)=0$ yields $A s \phi'(s) = A\phi(s) + L$. Recalling $\phi(s)=e^s-s-1$ and $\phi'(s)=e^s-1$, this becomes
\[
s(e^s-1) = e^s - s - 1 + \frac{L}{A}.
\]
Let now $u := e^s - 1$, so $s = \log(1+u)$.  Substituting and simplifying leads to
\[
h(u)  = \frac{B^2}{C_n^2} L,
\]
where $h(u) = (1 + u) \log (1 + u) - u$. Let $u^*$ denote such an optimal value for $u$, and analogously for $s^*$. Using that $A s^* \phi'(s^*) = A\phi(s^*) + L$, we compute
\[
g(s^*)=\frac{A\phi(s^*)+L}{s^*}=\frac{A s^* \phi'(s^*)}{s^*}=A\phi'(s^*)=A(e^{s^*}-1)=A u^*.
\]
Therefore the optimized upper bound 
evaluates to 
\[
Bg(s^*) = B A u^* = \frac{C_n^2}{B}u^*.
\]
Recalling that $u^*$ is the unique solution of \(h(u)=\dfrac{B^2}{C_n^2}\log\frac{2}{\delta}\), we conclude that 
\begin{align*}
    \sup_{t \leq n}\ldba (\rho I+V_t)^{-1/2}M_t\rdba
\leq \frac{C_n^2}{B} h^{-1}\lp\frac{B^2}{C_n^2}\log\lp \frac{2}{\delta}\rp\rp
\end{align*}
 holds with probability at least \(1-\delta\).

\subsection{Proof of Theorem~\ref{corollary:mixed_bennett}} \label{proof:mixed_bennett}

The proof of this corollary primarily relies on the combination of Theorem~\ref{theorem:main_theorem} and the Gamma-Poisson mixture argument from \citet[Proposition 10]{howard2021time}. 

Given that $e^u \leq 2 \cosh(u)$ for all $u \in \R$,  it follows that
\begin{align*}
    \mathfrak{e}_t := \exp \lp \ldba  \lambda(\rho I + V_t)^{-1/2}M_t  \rdba   -\sum_{i = 1}^te_i (\lambda) \rp \leq 2 S_t,
\end{align*}
where $S_t$ is the nonnegative supermatingale established in Theorem~\ref{theorem:main_theorem} with $S_0 = 1$. 
Denoting $\psi_{P, B}(\lambda) := B^{-2}(e^{\lambda B} - \lambda B - 1)$, and in view of 
\begin{align*}
    e_i (\lambda) \leq \psi_{P, B}(\lambda)\sigma_i^2 \|G_i\|^2,
\end{align*}
the process $\mathfrak{e}_t$ falls under \citet[Definition 1]{howard2021time} as $2$-sub-$\psi_{P, B}$ with variance process $\sum_{i\leq t}\sigma_i^2 \|G_i\|^2$. Thus, \citet[Proposition 10]{howard2021time} can be invoked to yield the corollary.

\subsection{Proof of Theorem~\ref{corollary:mixed_empirical_bennett}} \label{proof:mixed_empirical_bennett}

The corollary is obtained in view of Theorem~\ref{corollary:mixed_bennett} and a union bound. Let $A$ be the event that 
\begin{align*}
        \hat\sigma_{u, t, \delta_1}^2 < \sigma^2
\end{align*}
for some $t \geq 0$, and C the event that
\begin{align} \label{eq:gross_expression_proof}
    \frac{\lp \frac{\theta}{B^2} \rp^\frac{\theta}{B^2}}{\Gamma \lp \frac{\theta}{B^2}\rp \overline{\gamma} \lp \frac{\theta}{B^2}, \frac{\theta}{B^2}\rp} \frac{\Gamma \lp  \frac{Bs_t + \nu_t + \theta}{B^2} \rp \overline{\gamma} \lp \frac{Bs_t + \nu_t + \theta}{B^2}, \frac{\nu_t + \theta}{B^2} \rp}{\lp \frac{\nu_t + \theta}{B^2}\rp^{\frac{Bs_t + \nu_t + \theta}{B^2}} } \exp\lp{\frac{\nu_t}{B^2}}\rp > \frac{2}{\delta_2}
\end{align}
for some $t \geq 0$, where $\nu_t = \sigma^2\sum_{i \leq t}  \| G_i \|^2$.

We observe that $\Pb (A) \leq \delta_1$ by assumption, and $\Pb (C) \leq \delta_2$ by Theorem~\ref{corollary:mixed_bennett}. Thus $P(A \cup C) \leq \delta_1 + \delta_2 \leq \delta$ in view of the union bound. Thus, $P(\bar A \cap \bar C) = 1 - P( A \cup C) \geq 1 - \delta$.

Denote the LHS of \eqref{eq:gross_expression_proof} by $\mathfrak{e}_t$, and its empirical counterpart
\[ \frac{\lp \frac{\theta}{B^2} \rp^\frac{\theta}{B^2}}{\Gamma \lp \frac{\theta}{B^2}\rp \overline{\gamma} \lp \frac{\theta}{B^2}, \frac{\theta}{B^2}\rp} \frac{\Gamma \lp  \frac{Bs_t + \hat\nu_t + \theta}{B^2} \rp \overline{\gamma} \lp \frac{Bs_t + \hat\nu_t + \theta}{B^2}, \frac{\hat\nu_t + \theta}{B^2} \rp}{\lp \frac{\hat\nu_t + \theta}{B^2}\rp^{\frac{Bs_t + \hat\nu_t + \theta}{B^2}} } \exp\lp{\frac{\hat\nu_t}{B^2}}\rp\]
by $ \widehat {\mathfrak{e}_t}$. Since both expressions are obtained as mixtures of functions that are decreasing on $\sigma$, if $\bar A$ holds, then $\widehat{\mathfrak{e}_t} \leq \mathfrak{e}_t$. Furthermore, if $\bar C$ holds, then $\sup_t \mathfrak{e}_t \leq \frac{2}{\delta_2}$. Together, these imply that $\sup_t\widehat{\mathfrak{e}_t} \leq \frac{2}{\delta_2}$ with probability at least $1 - \delta$.

\subsection{Proof of Corollary~\ref{corollary:online_linear_regression}} \label{proof:corollary:online_linear_regression}
It suffices to observe that
\begin{align*}
        \ldba (V_t + \rho I)^{1/2}(\theta_t(\rho) - \theta^*) \rdba  &= \ldba (V_t + \rho I)^{1/2} \lp(V_t + \rho I)^{-1} \mathbf{X}_t^T \epsilon_{1:t} - \rho(\rho I + V_t)^{-1} \theta^* \rp \rdba
        \\&\leq \ldba  (V_t + \rho I)^{-\frac{1}{2}} \mathbf{X}_t^T \epsilon_{1:t} \rdba + \ldba \rho(\rho I + V_t)^{-\frac{1}{2}} \theta^*  \rdba\\
        &=\ldba  (V_t + \rho I)^{-\frac{1}{2}} M_t \rdba + \ldba \rho(\rho I + V_t)^{-\frac{1}{2}} \theta^*  \rdba
        \\&\leq J_t(\delta) + \rho^{1/2}D. 
\end{align*}

\subsection{Proof of Corollary~\ref{corollary:regret_bound}} \label{proof:regret_bound}
Define $\eta_0(\delta) = \rho^{1/2}D, \theta_0(\rho) = 0$ and consider $V_0 = 0$, so that $\Pi_0\lp \theta_{0}(\rho), \eta_{0}\rp$ is the ball centered at $0$ of radius $D$ containing $\theta^*$. For $t\ge 1$, let $\eta_t(\delta) = J_t(\delta) + \rho^{1/2}D$, where $J_t(\delta)$ is obtained from one of our concentration inequalities. Taking
\begin{align*}
    (X_t, \tilde \theta_t) = \arg\max_{x \in \X, f\in \Pi_{t-1}\lp \theta_{t-1}(\rho), \eta_{t-1}\rp}  \la f,  x \ra,
\end{align*}
the regret can be upper bounded with probability $1-\delta$ as
\begin{align*}
    r_t &= \la \theta^*,x^* \ra - \la \theta^*, X_t\ra \stackrel{(i)}{\leq} \la \tilde \theta_t, X_t \ra - \la \theta^*, X_t\ra 
    \\&= \la \tilde \theta_t -  \theta_{t-1}(\rho), X_t \ra - \la  \theta_{t-1}(\rho) - \theta^*, X_t \ra
    \\&\stackrel{(ii)}{\leq} \ldba \lp V_{t-1} + \rho I \rp^{-1/2} X_t \rdba \lp \ldba \lp V_{t-1} + \rho I \rp^{1/2} \lp \tilde\theta_t -  \theta_{t-1}(\rho)\rp \rdba + \ldba \lp V_{t-1} + \rho I \rp^{1/2} (\theta_{t-1}(\rho) - \theta^*)\rdba \rp
    \\&\stackrel{(iii)}{\leq} 2\eta_{t-1}(\delta)\ldba \lp V_{t-1} + \rho I \rp^{-1/2} X_t \rdba,
\end{align*}
where (i) follows from the definition of $(X_t,\tilde \theta_t)$ together with the fact that we are considering a high probability event where $\theta^* \in \Pi\lp \theta_t(\rho), \eta_t\rp$ for every $t\ge 0$, (ii) follows from Cauchy-Schwarz inequality, and (iii) follows from the definition of the ellipsoid $\Pi\lp \theta_{t-1}(\rho), \eta_{t-1}\rp$ along with the fact that both $\tilde \theta_t$ and $\theta^*$ belong to it. 
Hence, 
\begin{align*}
    R_T &= \sum_{t=1}^T r_t \stackrel{(i)}{\leq} \sqrt{T \sum_{t=1}^T r_t^2} \\
    &\stackrel{(ii)}{\leq} \sqrt{T \sum_{t=1}^T 4 \eta_{t-1}(\delta)^2 \ldba \lp V_{t-1} + \rho I \rp^{-1/2} X_t \rdba^2}
    \\&\stackrel{(iii)}{\leq} 4\eta_{T}(\delta) \sqrt{T \gamma_T (\rho)} 
\end{align*}
with probability $1 - \delta$, where (i) follows from the Cauchy-Schwarz inequality, (ii) follows from the elliptical potential lemma and (iii) is obtained given that $t \mapsto \eta_t(\delta)$ is non-decreasing. 

If $\eta_T(\rho)$ is obtained from Theorem~\ref{corollary:fixed_bernstein}, then
\begin{align*}
    J_T(\rho)  \leq B \log \lp \frac{2}{\delta} \rp + \sigma \sqrt{\gamma_T(\rho)}\sqrt{2  \log \lp \frac{2}{\delta}\rp}= \bigo \lp \sigma\sqrt{\gamma_T(\rho)}\rp.
\end{align*}

If $\eta_T(\rho)$ is obtained from Theorem~\ref{corollary:fixed_bennett}, it is not closed form. However, it is well known that
\begin{align*}
    h^{-1}(y) \leq \sqrt{2y} + \frac{y}{3},
\end{align*}
and so it follows that $J_T(\rho)$ is upper bounded by
\begin{align*}
    C_T\sqrt{2 \log(2/\delta)} + \frac{B}{3} \log(2/\delta) \leq \sigma\sqrt{8 \gamma_T(\rho) \log(2/\delta)} + \frac{B}{3} \log(2/\delta) = \bigo \lp \sigma\sqrt{\gamma_T(\rho)}\rp,
\end{align*}
where in the first inequality we used that $C_T := \sigma \sqrt{4\gamma_T(\rho)}$ is an upper bound on $\sigma \sqrt{\sum_{t \leq T}\|G_i\|^2}$ by definition of $\gamma_T(\rho)$. 
Consequently,
\begin{align*}
    \eta_T(\rho)
    = J_T(\rho) + \rho^{1/2} D = \bigo \lp \sigma\sqrt{\gamma_T(\rho)} + \sqrt\rho \rp, 
\end{align*}
which implies
\begin{align*}
    R_T = \bigo \lp \lp \sigma\sqrt{\gamma_T(\rho)} + \sqrt\rho \rp  \sqrt{T \gamma_T (\rho)} \rp.
\end{align*}

If $J_T(\rho)$ is obtained from Theorem~\ref{corollary:mixed_bennett}, \citet[Proposition 2]{howard2021time} implies that $J_T(\rho)$ is also $\bigo (\sigma\sqrt{\gamma_T(\rho)})$ up to logarithmic factors, from which the same regret bound (up to logarithmic factors) follows. Lastly, if $J_T(\rho)$ is obtained from Theorem~\ref{corollary:mixed_empirical_bennett}, and $\hat\sigma_{u, t, \delta_1}$ is $\sigma(1 + o(1))$ with  high probability, then the same regret bound holds. The $\sigma(1 + o(1))$ condition holds for the inequalities from \cite{martinez2025sharp}, with \citet[Section 4.4]{martinez2025sharp} establishing that $\hat\sigma_{u, T, \delta_1} \lessapprox \sigma + c/\sqrt{T}$ for some constant $c$.

\section{Extended related work} \label{section:related_work}

\paragraph{Self-normalized scalar processes.} A prominent line of research concerns self-normalized concentration inequalities developed by \citep{de2004self, de2007pseudo, pena2009self}, which establish time-uniform guarantees on the behavior of self-normalized scalar processes. These results are obtained via the method of mixtures, a probabilistic technique originally introduced by Robbins \citep{darling1967iterated, darling1968some}, which constructs bounds by averaging over a parameterized family of exponential supermartingales. Building on this framework, \citet{bercu2008exponential} further explored the self-normalized regime, deriving concentration inequalities accommodating asymmetric and heavy-tailed increment distributions. Later on, they extended this analysis by incorporating both predictable and empirical quadratic variations \citep{bercu2019new}.

\paragraph{Sub-Gaussian self-normalized vector processes.} Going from one dimension to several or infinite dimensions is far from straightforward. For this reason, most of the advances for self-normalized processes are in the sub-Gaussian case, where mixture methods provide clean concentration inequalities \citep{abbasi2011improved, chowdhury2017kernelized, flynn2023improved, flynn2024tighter}. We highlight the seminal work by \citet{abbasi2011improved}, which was extended to Hilbert spaces by \citet{abbasi2013online} (see also \citet{whitehouse2023sublinear}). \citet{chowdhury2017kernelized} also provided a related (though inferior) concentration inequality using a `double mixture' technique.  

\paragraph{Self-normalized vector processes beyond sub-Gaussianity.} 

 \citet{victor2009theory}  worked out some multivariate inequalities in more general regimes. However, these are not closed form and their theoretical properties hard to study. More recently, \citet{whitehouse2023time} presented tractable self-normalized inequalities for general light-tailed noises; however, their argument relies on a covering argument that is dimension dependent and not generalizable to infinite dimensions. Similarly, \citet{ziemann2024vector} provided a self-normalized vector Bernstein inequality that is also dimension dependent and restricted to finite dimensional spaces, via PAC-Bayes arguments. Concurrent contributions explored dimension-independent self-normalized inequalities.  \citet{chugg2025variational} developed inequalities via PAC-Bayes;  while the bounds are explicit, it is not straightforward to analyze the rate because they depend on the inverse of a tail decay function and some ratio of eigenvalues. \citet[Theorem 5.1]{metelli2025generalized} developed Bernstein-like concentration inequalities for bounded noises relying on stitching arguments. \citet{akhavan2025bernstein} leveraged method of mixtures and truncation arguments to also develop a Bernstein-like concentration inequality. Our approach is fundamentally different to these recent works, building on the tools originally introduced in~\citet{pinelis1994optimum}. In a different line of research, \citet[Theorem 4.1]{zhou2021nearly} also presented a Bernstein-type self-normalized concentration inequality for vector-valued martingales, whose proof exploits solely univariate concentration inequalities (similarly to \citet[Theorem 5]{dani2008stochastic}). In contrast to our Bernstein-type inequality, this results in substantially looser constants and an extra logarithmic term in the sample size. Furthermore, our main theorem is significantly more general, leading to Bennett-type inequalities and generalizing to unbounded random variables. Follow-up work  sharpened the inequalities in specific scenarios \citep{zhou2022computationally, he2023nearly, zhao2023variance}, e.g. weighted linear regression, which are out of the scope of this contribution.  

\paragraph{Light-tailed vector-valued concentration inequalities.} In the context of sums of random vectors, \citet{pinelis1992approach, pinelis1994optimum} introduced a martingale based approach tailored to light-tailed random vectors, which led to generalizations of well-known concentration inequalities (such as  Hoeffding and Bernstein inequalities) that hold uniformly over time in smooth Banach spaces. In his framework, any dependence on dimensionality is effectively substituted by a geometric property of the underlying Banach space, i.e. its smoothness parameter (which equals one in Hilbert spaces). Recent vector-valued concentration bounds, such as sharp vector-valued empirical Bernstein inequalities \citep{martinez2024empirical, martinez2025sharp} or heavy-tailed vector-valued concentration inequalities \citep{whitehouse2024mean} build on the theoretical tools introduced by Pinelis. However, these were not self-normalized, and our contribution pushes on this trajectory by generalizing the Pinelis framework to self-normalized processes.

\paragraph{Time-uniform Chernoff bounds.} The use of nonnegative supermartingale techniques to derive concentration inequalities has gained significant traction to provide probabilistic guarantees for streams of data that are continuously
monitored and adaptively stopped, with Ville's inequality~\citep{ville1939etude} being the theoretical pillar of this line of research. The results presented in this work fall within the broader umbrella of time-uniform concentration, aligning with the anytime-valid Chernoff-style bounds exhibited in \citet{howard2020time, howard2021time}.


\section{Further experiments} \label{section:further_experiments}

In Section~\ref{section:experiments}, we presented an empirical comparison between our inequalities and those of \citet{abbasi2011improved}. We extend these experiments to additionally include the inequalities proposed in \citet[Theorem~3.2]{akhavan2025bernstein} and \citet[Theorem~5.1]{metelli2025generalized}. All hyperparameter choices are kept identical to those used in Section~\ref{section:experiments}. Figure~\ref{fig:experiments_recent_work} reports the resulting confidence bounds corresponding to the different concentration inequalities. We observe that our inequalities, as well as those of \citet{abbasi2011improved}, yield substantially tighter bounds in practice than these more recent alternatives. This behavior is consistent with the theoretical considerations discussed in Section~\ref{section:comparison_existing_work_main_body}.

\begin{figure}[ht] 
    \centering
  \includegraphics[width=\textwidth]{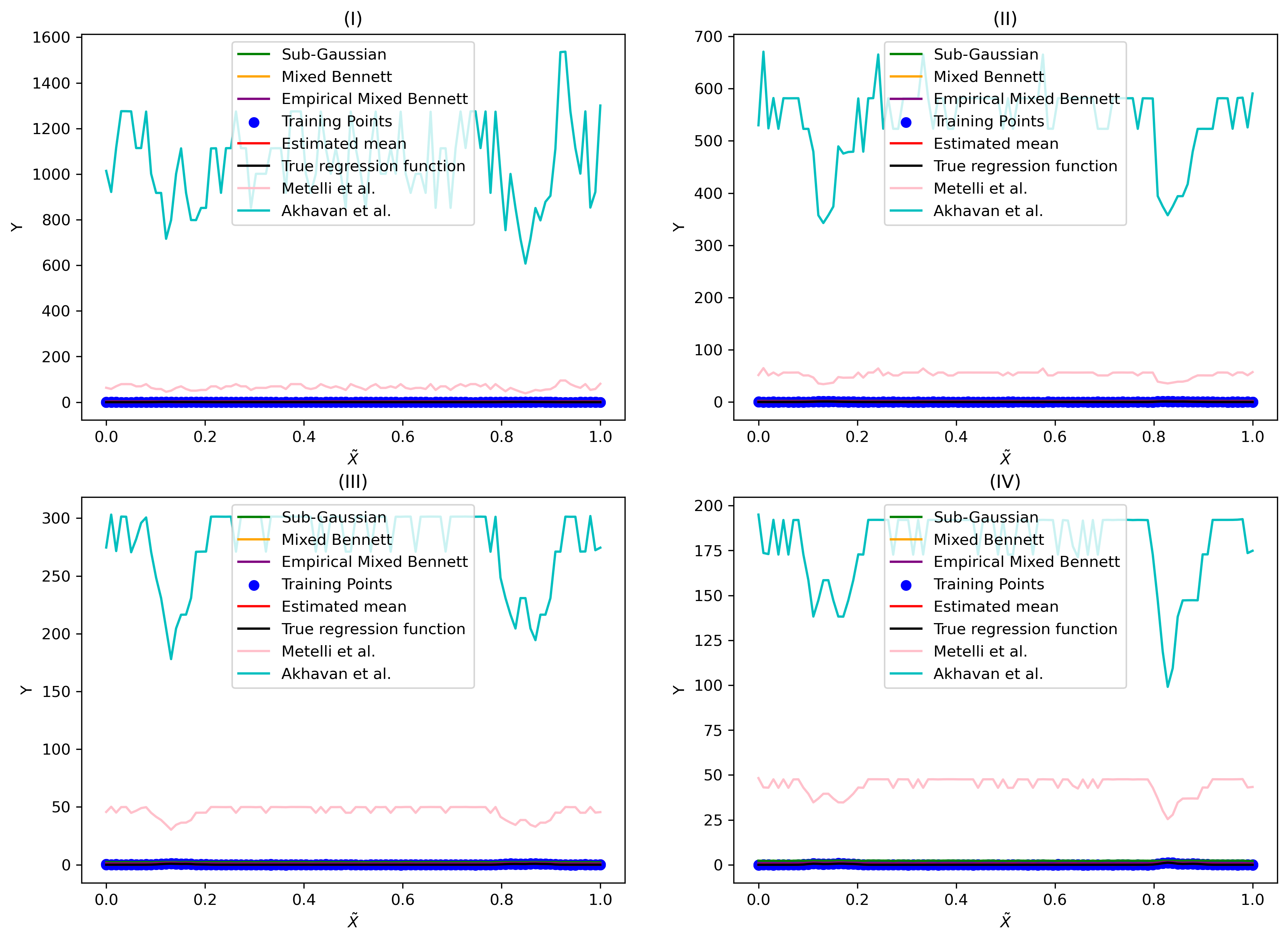}
  \caption{Illustration of the optimistic upper confidence bounds for the regression function after $500$ rounds using sub-Gaussian, mixed Bennett, empirical mixed Bennett inequalities, \citet[Theorem 3.2]{akhavan2025bernstein}, and \citet[Theorem 5.1]{metelli2025generalized} for noises following (I) a rescaled uniform distribution, (II) a rescaled ($5, 5$)-beta distribution, (III) a rescaled ($20, 20$)-beta distribution, and (IV) a rescaled ($50, 50$)-beta distribution. Training points are drawn following a UCB procedure, with the covariates $X_t = k(\cdot, \tilde X_t)$ illustrated in the original space (pre-embedded in the RKHS). }
  \label{fig:experiments_recent_work}
\end{figure}

\section{Limitations of our work} \label{section:limitations}

We discuss in this section the limitations of our contribution. For simplicity, let us focus on bounded random noises (Assumption~\ref{assumption:bounded_variance}) such that the conditional standard deviation $\sigma_t$ is constant and equal to $\sigma$. In such a setting, our Bernstein-type inequality from Theorem~\ref{corollary:fixed_bernstein} establishes a confidence interval with radius
\begin{align*}
  B \log \lp \frac{2}{\delta} \rp + C_n\sqrt{2  \log \lp \frac{2}{\delta}\rp}.
\end{align*}
In view of $C_n^2 \leq \sigma^2 \sum_{i \leq n} \|G_i\|^2 \leq 4 \sigma^2 \gamma_n(\rho)$, the dominating term of the above expression can be upper bounded by  
\begin{align*}
     2 \sigma \sqrt{2 \gamma_n(\rho)  \log \lp \frac{2}{\delta}\rp }.
\end{align*}
Furthermore, the sub-Gaussian concentration inequality from \citet{abbasi2011improved} yields a confidence interval with a radius that can be upper bounded by
\begin{align*}
    \frac{B}{2} \sqrt{2 \log \lp \frac{1}{\delta} \rp + 2 \gamma_n(\rho) }.
\end{align*}
Assuming that $\sigma \approx B$ for ease of comparison, note that both radii scale as $\bigo (\sqrt{\gamma_n(\rho)})$ with $n$ (this is precisely the term that is usually considered in the regret analyses for the bandit problem).

However, our inequality was obtained after optimizing for $\lambda$ for a given $n$, while the inequality from \citet{abbasi2011improved} uses a mixture argument of the analogous hyperparameter (which is a vector in their case, and they mix following a standard multivariate Gaussian distribution). It is well understood that mixing generally leads to confidence intervals that are inflated by a logarithmic factor of the pseudo-variance process in comparison to tightly optimized inequalities, see e.g. \citet[Section 3]{howard2021time} for a discussion. This begs the question of whether our inequalities could be improved by a logarithmic factor.

In order to address this question, let us consider the reduction of our problem to one dimension. We can think of the one-dimensional problem as the multivariate setting where all the directions are $X_t = r_t e_1$. For simplicity, we assume that these vectors have constant unit norms ($r_t = 1$), so $X_t = e_1$ for all $t$. In this case, 
\begin{align*}
    \ldba (\rho I + V_t)^{-1/2}M_t  \rdba = \frac{\lba \sum_{i \leq t} \epsilon_i \rba}{ \sqrt{\rho+t}}, \quad \ldba G_t\rdba = \frac{1}{ \sqrt{\rho+t}}.
\end{align*}
Observing that \(\sum_{i \leq n} \|G_i\|^2 = \sum_{i \leq n} \frac{1}{\rho + i} \asymp \log n\), we find that our self-normalized Bernstein inequality (Theorem~\ref{corollary:fixed_bernstein}) provides confidence radii scaling as \(\bigo(\sqrt{\log n})\). By contrast, applying the classical univariate Bernstein inequality to \(|\sum_{i \leq t} \epsilon_i|\) yields radii of order \(\sqrt{n}\); after division by \(\sqrt{\rho + n}\), these become \(\bigo(1)\). Thus, our inequalities are loose by a logarithmic factor, at least in this scenario. This extra logarithmic factor can be directly recognized in the supermartingale construction, which in this one dimensional setting reduces to 
\begin{align*}
     S_t = \cosh \lp \lambda \frac{\lba \sum_{i \leq t} \epsilon_i \rba }{ \sqrt{\rho+t}}  \rp \exp \lp - \sigma^2\psi_{P, B}(\lambda)\sum_{i \leq t} \frac{1}{\rho + i}  \rp,
\end{align*}
where $\psi_{P, B}(\lambda) = \frac{e^{\lambda B} - \lambda B - 1}{B^2}$. The  term $\sum_{i \leq t} 1/(\rho + i)$ being $\bigo(\log(t))$, as opposed to $\bigo(1)$, is what causes the looseness in the final concentration inequality.

One may wonder whether this looseness stems from the overall approach (i.e., looking for a nonnegative supermartingale that is already self-normalized), or rather a weak technical analysis of it. We shall argue for the former. In order to see this, let us consider the simpler (only involving $\exp$, not $\cosh$) one-dimensional supermartingale construction 
\begin{align*}
     S_t^+ &= \exp \lp \lambda \frac{\sum_{i \leq t} \epsilon_i }{ \sqrt{\rho+t}}   - \sigma^2\psi_{P, B}(\lambda)\sum_{i \leq t} \frac{1}{\rho + i}  \rp.
\end{align*}
Such a nonnegative supermartingale is of the form 
\begin{align*}
    S_t(\psi, V_t) = \exp \lp \lambda \frac{\sum_{i \leq t} \epsilon_i }{ \sqrt{\rho+t}}   - \psi(\lambda)V_t\rp 
\end{align*}
with $\psi(\lambda) = \psi_{P, B}(\lambda)$, and $V_t = \sigma^2\sum_{i \leq t} \frac{1}{\rho + i}$.
We would like to find $\psi$ and $V_t$, such that $\psi$ is \textit{CGF-like}\footnote{A real valued function $\psi$ with domain $[0, \lambda_{\max})$ is called \textit{CGF-like} if it is strictly convex and twice continuously differentiable with $\psi(0) = \psi'(0_+) = 0$ and $\sup_{\lambda \in [0, \lambda_{\max})}\psi(\lambda) = \infty$. See \citet[Section 2.1]{howard2020time} for a detailed presentation.} and $V_t$ is nonnegative and $\bigo(1)$.
If there do not exist such $\psi$ and $V_t$, then we can conclude that the limitations of our inequalities stem indeed from the approach itself. To see that this is the case, observe that
\begin{align*}
    \frac{S_t(\psi, V_t)}{S_{t-1}(\psi, V_{t-1})} =  \exp \lp \lambda \lp \frac{1}{ \sqrt{\rho+t}}-\frac{1}{ \sqrt{\rho+t+1}} \rp \sum_{i \leq {t-1}} \epsilon_i + \lambda \frac{\epsilon_t }{ \sqrt{\rho+t}}  - \psi(\lambda)(V_t-V_{t-1})\rp.
\end{align*}
Note that
\begin{align*}
    \lp \frac{1}{ \sqrt{\rho+t}}-\frac{1}{ \sqrt{\rho+t+1}} \rp \sum_{i \leq {t-1}} \epsilon_i
\end{align*}
can be arbitrarily small (on an event with non-zero probability), and hence in order to obtain the supertingale condition
\begin{align*}
    \frac{\E_{t-1} S_t(\psi, V_t)}{S_{t-1}(\psi, V_{t-1})} \leq 1,
\end{align*}
we ought to have
\begin{align*}
    \E_{t-1}\exp \lp  \lambda \frac{\epsilon_t }{ \sqrt{\rho+t}}  - \psi(\lambda)(V_t-V_{t-1})\rp \lessapprox 1.
\end{align*}
If $\epsilon_t$ is sub-$\psi$ with pseudo-variance $\sigma^2$, i.e.
\begin{align*}
    \E_{t-1} \exp \lp   \lambda \epsilon_t  - \psi(\lambda)\sigma^2 \rp \leq 1,
\end{align*}
then $\epsilon_t / \sqrt{\rho + t}$ is sub-$\psi$ with pseudo-variance $\sigma^2/(\rho + t)$. Consequently, $V_t - V_{t-1}$ can be taken as  $\sigma^2/(\rho + t)$. However, this implies that $V_t$ is $\omega(1)$. Hence, the limitations of our inequalities seem to stem from the approach itself, not than a loose technical analysis. Investigating whether these inequalities can be further refined through an alternative approach remains an open direction of research.


\end{document}